\pdfoutput=1
\def\djepaauthorversion{1}
\documentclass{article}
\usepackage{iclr2027_conference,times}
\usepackage[T1]{fontenc}
\usepackage[utf8]{inputenc}
\usepackage{microtype,graphicx,amsmath,amssymb,amsthm}
\usepackage[table]{xcolor}
\usepackage{booktabs,tabularx,array,makecell}
\usepackage{caption,subcaption}
\usepackage{tikz}
\usepackage{enumitem,placeins}
\usepackage{etoolbox}
\usepackage{hyperref}
\definecolor{headerbg}{HTML}{F3EDF6}
\definecolor{groupbg}{HTML}{FAF7FC}
\definecolor{sectiongray}{HTML}{F2F2F2}
\definecolor{variantbg}{HTML}{F7F2F9}
\definecolor{oursbg}{HTML}{EFE4F4}
\definecolor{referencebg}{HTML}{FFF8E7}
\definecolor{badgepurple}{HTML}{E7D8ED}
\definecolor{badgeyellow}{HTML}{FCEFCF}
\definecolor{methodpurple}{HTML}{A45AA8}
\definecolor{ink}{HTML}{3D3840}
\newcommand{\titlewithicon}[2]{%
  \IfFileExists{figures/jepa-icon-light.pdf}{%
    \parbox[c]{40pt}{\includegraphics[width=40pt]{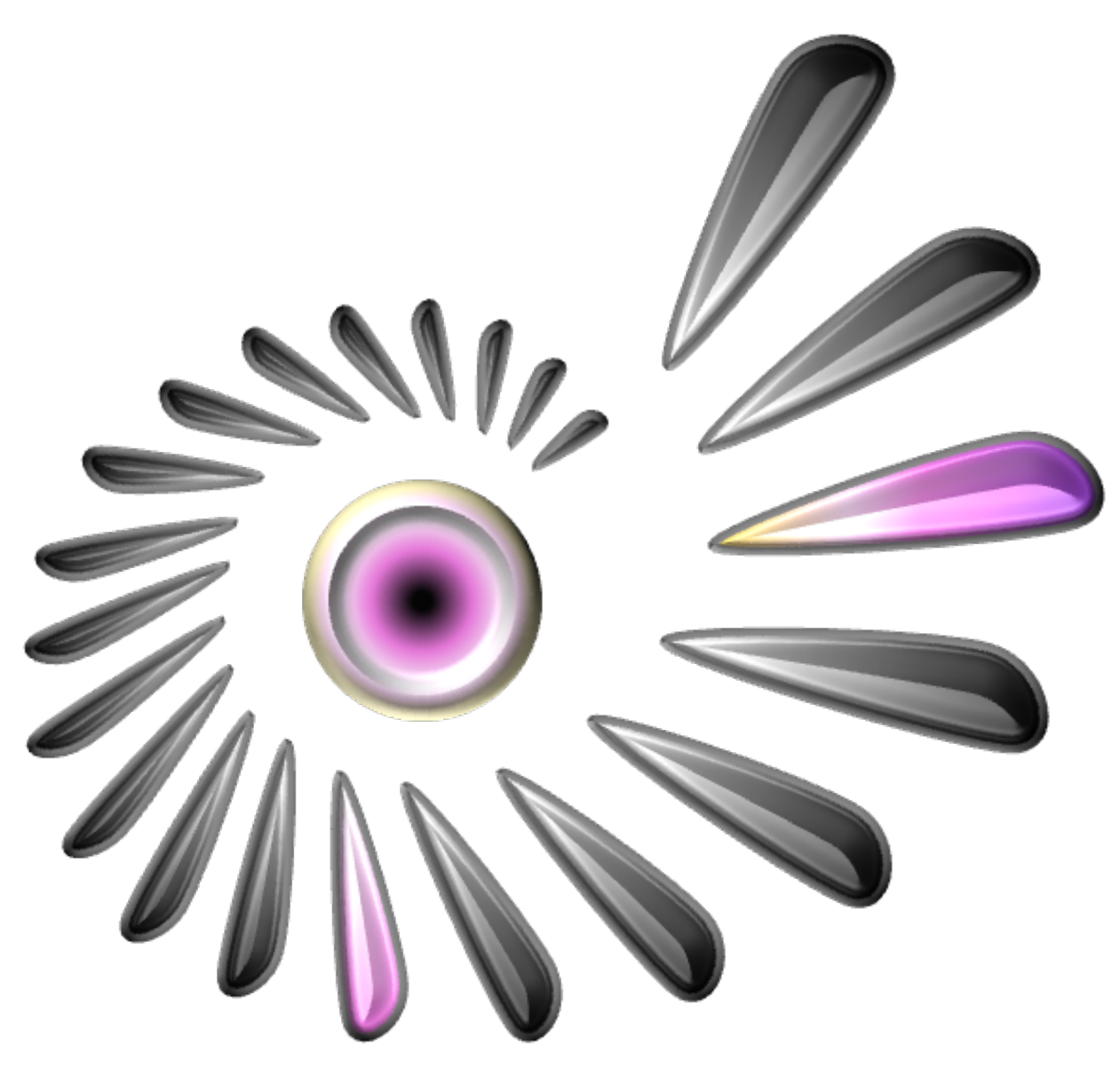}}%
    \hspace{10pt}%
    \parbox[c]{\dimexpr\textwidth-50pt\relax}{\raggedright #1\\#2}%
  }{#1\\#2}}
\newcommand{\method}{D-JEPA}
\newcommand{\bestcell}[1]{\textbf{#1}}
\newcommand{\pubref}[3]{\hyperlink{cite.#3}{(#1, \textit{#2})}\nocite{#3}}
\newcommand{\yearref}[3]{\hyperlink{cite.#3}{(#1, #2)}\nocite{#3}}
\makeatletter
\newsavebox{\paperbadgebox}
\newcommand{\paperbadge}[2][badgepurple]{%
  \begingroup
  \sbox{\paperbadgebox}{\fontsize{5.1}{6.0}\selectfont\sffamily\mdseries #2}%
  \raisebox{\dimexpr(\ht\@arstrutbox-\dp\@arstrutbox)/2\relax}{%
    \tikz[baseline=(badge.center)]{\node[rounded corners=1.8pt,
      fill=#1,draw=#1!65!black,line width=.22pt,
      inner xsep=2pt,inner ysep=.65pt,text=ink] (badge) {\usebox{\paperbadgebox}};}}%
  \endgroup}
\makeatother

\newcommand{\rank}{\operatorname{rank}}
\newcommand{\softplus}{\operatorname{softplus}}
\newcommand{\argmin}{\operatorname*{arg\,min}}
\newcommand{\argmax}{\operatorname*{arg\,max}}
\newcolumntype{L}{>{\raggedright\arraybackslash}X}
\newcolumntype{C}{>{\centering\arraybackslash}X}
\newcommand{\tablefont}{\fontsize{7.0}{9.25}\selectfont}
\newcommand{\tablenote}[1]{\par\vspace{2pt}{\fontsize{6.6}{8.35}\selectfont\raggedright #1\par}\vspace{2pt}\hrule height .4pt\relax}
\renewcommand{\arraystretch}{1.13}
\setlist[itemize]{leftmargin=*,itemsep=2pt,topsep=3pt}
\setlist[enumerate]{leftmargin=*,itemsep=2pt,topsep=3pt}
\newcommand{\conceptfigure}[3]{%
  \IfFileExists{figures/#1.pdf}{\includegraphics[width=\linewidth]{figures/#1.pdf}}{%
  \begingroup\setlength{\fboxsep}{0pt}\color{ink}%
  \fbox{\begin{minipage}[c][#2][c]{\dimexpr\linewidth-2\fboxrule\relax}%
    \centering\small #3\par\vspace{3mm}%
    \footnotesize Reserved composition; no synthetic experimental image.
  \end{minipage}}\endgroup}}

\graphicspath{{figures/compact/}{figures/experimental/}{figures/}}
\definecolor{linkpurple}{HTML}{FF1493}
\hypersetup{colorlinks=true,linkcolor=linkpurple,citecolor=linkpurple,urlcolor=linkpurple,
  pdftitle={D-JEPA: A Decision-Aligned Latent World Model},pdfauthor={}}
\newtheorem{proposition}{Proposition}
\newtheorem{lemma}{Lemma}
\newtheorem{corollary}{Corollary}
\title{\titlewithicon{D-JEPA: A Decision-Aligned Latent}{World Model}}
\ifdefined\djepaauthorversion
  \iclrfinalcopy
  \author{%
\makebox[\textwidth][c]{%
\begin{tabular}{c}
Shuaijun Liu$^{1}$ \quad Chengyu Wu$^{1}$ \quad Qifu Wen$^{2,3}$ \quad Feiyang You$^{1}$ \\
Chenglong Zhang$^{1}$ \quad Shuyang Hao$^{1}$ \quad Xi Lin$^{3}$ \quad Ningxin Su$^{1,*}$ \\[0.45em]
{\normalfont\small $^{1}$The Hong Kong University of Science and Technology (Guangzhou)} \\
{\normalfont\small $^{2}$Boston University \quad $^{3}$Shanghai Jiao Tong University} \\
{\normalfont\small $^{*}$Corresponding author: \href{mailto:ningxinsu@hkust-gz.edu.cn}{\textcolor{gray}{ningxinsu@hkust-gz.edu.cn}}} \\
{\normalfont\small Project Website: \href{https://nebulis-lab.com/D-JEPA}{https://nebulis-lab.com/D-JEPA}}
\end{tabular}%
}}

  \hypersetup{pdfauthor={Shuaijun Liu, Chengyu Wu, Qifu Wen, Feiyang You, Chenglong Zhang, Shuyang Hao, Xi Lin, Ningxin Su}}
\fi
\makeatletter
\ifdefined\djepaauthorversion
\patchcmd{\@maketitle}{Published as a conference paper at ICLR 2027}{Preprint}%
{}{\PackageError{D-JEPA}{Author-version header could not be patched}{Check the title definition in the ICLR style.}}
\patchcmd{\@maketitle}{\@title\par}{\@title\par\vskip 10pt}%
{}{\PackageError{D-JEPA}{Title-to-author spacing could not be patched}{Check the title definition in the ICLR style.}}
\else
\patchcmd{\@maketitle}{Anonymous authors}{%
  Anonymous authors\hspace{1.2em}{\normalfont\normalsize\bfseries\textbar}\hspace{1.2em}{\normalfont\normalsize\bfseries Anonymous Website:
  \href{https://anonymous-for-sub.github.io/D-JEPA}{\textmd{https://anonymous-for-sub.github.io/D-JEPA}}}%
}{}{\PackageError{D-JEPA}{Anonymous author line could not be patched}{Check the title definition in the ICLR style.}}
\patchcmd{\@maketitle}{Paper under double-blind review}{%
  \underline{\normalfont\itshape Paper under double-blind review}%
}{}{\PackageError{D-JEPA}{Review notice could not be patched}{Check the title definition in the ICLR style.}}
\fi
\patchcmd{\@maketitle}{\vskip 0.3in minus 0.1in}{\vskip 0.12in}%
{}{\PackageError{D-JEPA}{Title-to-overview spacing could not be patched}{Check the title definition in the ICLR style.}}
\makeatother
\date{}
\begin{document}
\begingroup
\setlength{\tabcolsep}{0pt}
\maketitle
\endgroup
\noindent
\begin{minipage}{\textwidth}
\centering
\includegraphics[width=\linewidth]{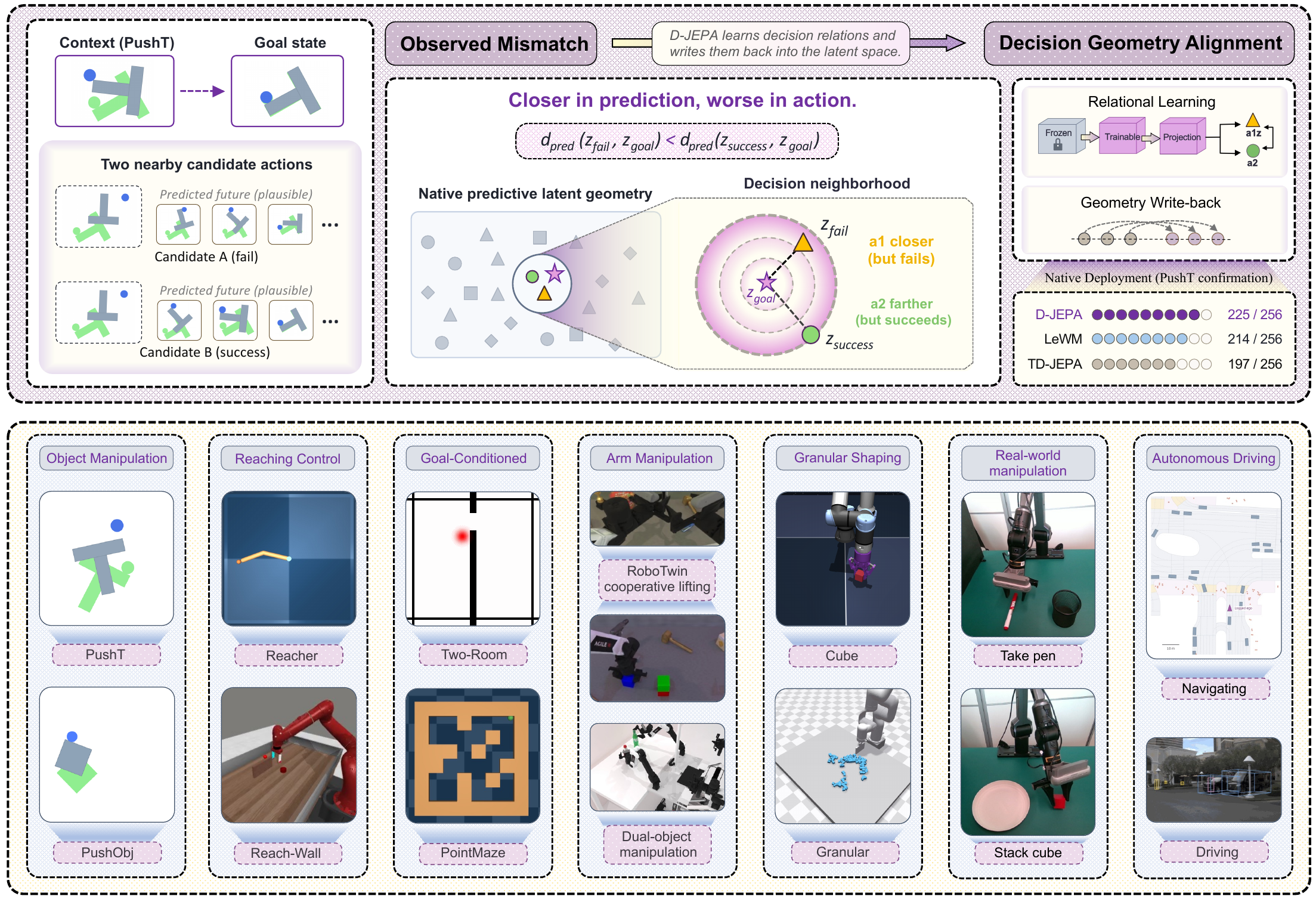}
\captionof{figure}{\textbf{From predictive proximity to decision-aligned control.} A candidate closer to the goal in predictive latent space can fail while a farther alternative succeeds. D-JEPA learns relations among futures and realizes decision structure in future representations. The PushT confirmation summarizes action-selection performance; the lower strip situates the framework across manipulation, reaching, goal-conditioned control and driving.}
\label{fig:overview}
\vspace{3mm}
\end{minipage}
\par
\begin{abstract}
Latent world models predict the consequences of actions, but accurate prediction does not guarantee that latent distance reflects which candidate will execute successfully. We identify a decision-local prediction gap: among the few futures competing for execution, a candidate predicted closer to the goal can produce a worse realized outcome than an available alternative. We introduce D-JEPA, a decision-aligned latent world model that learns decision-relevant relations among candidate futures from executed outcomes. A bounded, permutation-equivariant operator jointly reasons over goal-relative predictive features and ordinal evidence, refining pretrained predictive geometry where action choices are most consequential. Restricted predictor adaptation and a shared ordinal interface extend this alignment across complementary predictive geometries. D-JEPA further realizes the learned decision structure in JEPA-compatible future representations, enabling deployment through native latent-distance planning. Evaluations across latent control, manipulation, pretrained action-producing models, physical robots and autonomous driving demonstrate improved action selection, including 87.89\% success on PushT, a 15.04-point average gain on RoboTwin, and a 17-point gain on physical robot tasks. These results establish decision-relevant relational structure as a direct bridge between predictive world modeling and effective control.
\end{abstract}

\section{Introduction}
World models turn observations and hypothetical actions into predictions of what could happen next. Joint-embedding predictive architectures predict in a learned representation space rather than reconstructing pixels~\citep{assran2023ijepa,assran2025vjepa2,maes2026lewm}. A goal-conditioned planner compares these predicted futures with a goal representation and executes the preferred sequence. Predictive geometry therefore determines which action reaches the environment.

A smaller predicted goal distance can favor an action that fails over an alternative that succeeds. This conflict is consequential among the few candidate futures competing for execution. On matched PushT decisions, within-start Spearman correlation between predicted and realized latent costs falls from 0.90 to 0.11 for LeWM and 0.80 to 0.13 for TD-JEPA as the full set narrows to the preferred four (Figure~\ref{fig:mismatch}). This \emph{decision-local prediction gap} complements work on plan-cost fidelity and planning-aware latent metrics~\citep{you2026control,bai2026temporal}. We turn the mismatch into a concrete learning problem: executed candidate outcomes supervise relations among the futures that compete to determine the next action.

We introduce \method{} (Figure~\ref{fig:overview}), which learns a bounded relational correction over the complete candidate set. Its central operator jointly processes goal-relative predictive features and ordinal evidence, sharpening distinctions near the decision boundary. Restricted predictor adaptation supplies a complementary proposal to the relational operator. A common ordinal interface integrates evidence across heterogeneous predictive geometries.

D-JEPA further realizes the learned decision structure in JEPA-compatible future representations. Bounded temporal transport learns same-action changes along a predicted trajectory; exact ordinal realization embeds this structure in terminal goal-relative geometry. The latter makes aligned action choices directly available through the native latent-distance planning interface. We evaluate the framework across latent control, robotic manipulation, pretrained action-producing models, physical robots and autonomous driving, with additional geometric and visual shifts. Matched ablations isolate the core relation operator and its complementary predictive evidence.

Our contributions are: (1) \textbf{Decision-local diagnosis.} We identify and quantify a mismatch between predicted goal distance and executed outcomes near action selection, and formulate candidate-outcome supervision for this regime. (2) \textbf{Relational decision alignment.} We introduce a bounded, permutation-equivariant operator that learns decision-relevant relations among pretrained candidate futures, complemented by restricted predictor adaptation and an ordinal interface across predictive geometries. (3) \textbf{Decision-aligned future representations.} We realize learned decision structure in JEPA-compatible future geometry, recover action choices through native latent distance, and validate decision alignment across simulated control, pretrained action models, driving and physical robots.

\section{Related Work}
\label{sec:related}
Latent dynamics support both planning and behavior learning. PlaNet plans through learned latent transitions, Dreamer learns behavior through latent imagination, and TD-MPC couples task-oriented dynamics with value-based control~\citep{hafner2019planet,hafner2025mastering,hansen2024tdmpc2}. Joint-embedding methods learn representations by predicting target embeddings~\citep{assran2023ijepa}. V-JEPA~2 extends this direction to action-conditioned prediction and planning~\citep{assran2025vjepa2}; DINO-WM uses pretrained visual features for world modeling~\citep{zhou2025dinowm}; and LeWM learns a compact predictive representation end to end~\citep{maes2026lewm}. These models establish increasingly capable predictive futures. D-JEPA learns decision-relevant relations among the predicted futures that directly compete for execution.

Planning-aware representation learning makes predictive geometry explicitly consequential for control. TD-JEPA mines directed temporal progress and studies task-dependent deployment of temporal and Euclidean costs~\citep{bai2026temporal}, while A Control Theory of Predictability connects plan quality to cost fidelity on planner-reachable candidates~\citep{you2026control}. D-JEPA turns this interface into a supervised candidate-set learning problem: executed consequences teach a complete-set operator which relative distinctions matter among alternatives sharing the same observed context and goal, and representation lifting exposes the learned order through a latent readout.

World-model design studies provide complementary sources of predictive structure. JEPA-WM systematically examines architectures, objectives and planning choices~\citep{terver2026jepawm}, while Fast LeWorldModel changes the prediction operator to action-prefix prediction~\citep{gao2026fast}. D-JEPA provides a shared decision-alignment interface over such sources: dense descriptors remain inside each predictive space, while ordinal coordinates communicate complementary evidence across heterogeneous geometries. This construction improves action selection while preserving the semantics of each source geometry.

Visuomotor methods such as Diffusion Policy learn action distributions directly from demonstrations~\citep{chi2023diffusion}. D-JEPA operates at the complementary decision stage, learning which available predictive future should guide execution. The same relational principle therefore applies to latent planners, pretrained VLA action chunks, Drive-JEPA trajectories and V-JEPA~2-AC candidates. Section~\ref{sec:problem} next formalizes and measures the decision-local prediction gap that motivates this design.

\section{The Decision-Local Prediction Gap}
\label{sec:problem}
We begin from the interface exposed by the preceding predictive models: a set of action-conditioned futures whose relative geometry determines which action is executed. Let $x$ denote the observed context, $g$ a goal observation, and $\mathcal A=\{a_i\}_{i=1}^{K}$ a set of candidate action sequences. Predictive model $m$ produces future latents $\hat z^m_{i,1:H}=F_m(x,a_i)$ and goal embedding $z_g^m=E_m(g)$. Its native planning cost is $c_i^m=C_m(\hat z^m_{i,1:H},z_g^m)$; a common choice is terminal mean-squared latent distance. The selected action is $a_{\argmin_i c_i^m}$. Executing $a_i$ from the same restored state supplies an outcome, a task cost, and a success label $y_i\in\{0,1\}$.

A recorded PushT pair exposes a direct preference conflict: LeWM assigns a smaller predicted goal RMS distance to a failing candidate than to a successful alternative (Figure~\ref{fig:mismatch}a). Across the 96-start audit, each model prefers a failing candidate on eight starts despite an available successful alternative (Figure~\ref{fig:mismatch}b). We then measure success--failure pair inversions within each model's preferred shortlist. Among mixed-outcome top-four shortlists, mean inversion rates reach 49.0\% for LeWM and 38.2\% for TD-JEPA (Figure~\ref{fig:mismatch}c).

The complementary cost diagnostic compares predicted costs with costs from simulator-realized observations under the same backbone's latent goal criterion. Within-start Spearman correlation drops below 0.13 for both models among their four lowest-cost candidates (Figure~\ref{fig:mismatch}d). Predictive geometry remains informative across the full set, yet weakly distinguishes the actions closest to execution. Appendix~\ref{app:distanceconflict} defines the outcome-based diagnostics; Table~\ref{tab:diagnostic} gives all correlation coefficients.

\begin{figure}[t]
\centering
\begin{subfigure}[t]{0.24\linewidth}\includegraphics[width=\linewidth]{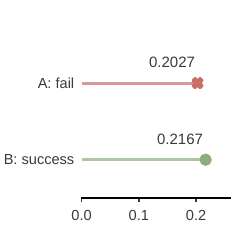}\caption{Recorded distances.}\end{subfigure}\hfill
\begin{subfigure}[t]{0.24\linewidth}\includegraphics[width=\linewidth]{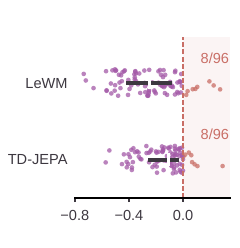}\caption{Distance gaps.}\end{subfigure}\hfill
\begin{subfigure}[t]{0.24\linewidth}\includegraphics[width=\linewidth]{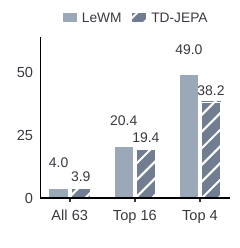}\caption{Pair inversions (\%).}\end{subfigure}\hfill
\begin{subfigure}[t]{0.24\linewidth}\includegraphics[width=\linewidth]{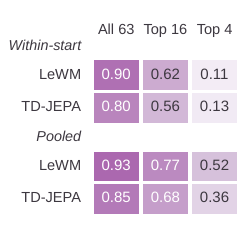}\caption{Rank correlation.}\end{subfigure}
\caption{\textbf{Predicted distance preferences can conflict with executed outcomes.} (a) Predicted goal RMS distance favors failing A over successful B in a recorded LeWM pair. (b) Per-start normalized distance gaps between the closest successful and failing candidates; positive values favor failure. Points show all 96 audit starts per model; bars mark medians and interquartile intervals. (c) Mean success--failure pair inversion rates within mixed-outcome shortlists; top-four means use 34 LeWM and 29 TD-JEPA starts. (d) Predicted--realized latent-cost correlations on the same audit. The illustrative pair in (a) belongs to a separate confirmation start. Definitions and all denominators appear in Appendix~\ref{app:distanceconflict}.}
\label{fig:mismatch}
\end{figure}

This diagnosis defines a direct learning target. Training-time counterfactual execution identifies successful and unsuccessful alternatives within each candidate set, and fitting data teach their decision relations. At deployment, the model receives context-derived predictive evidence and selects an action whose executed outcome is evaluated independently. Fixed candidate pools hold action availability constant, while the relational formulation applies to a general set of size $K$.

Global predictive agreement alone cannot guarantee correct action selection (Proposition~\ref{prop:globalgap}). We retain complete-set predictive context while concentrating supervision on the decision-relevant subset and bounding changes to the underlying representation or decision score.

\section{D-JEPA}
\label{sec:method}
D-JEPA aligns pretrained predictive geometry with decision outcomes through relational candidate reasoning (Figure~\ref{fig:method}), combines complementary predictive evidence (Figure~\ref{fig:composition}), and realizes the aligned structure in future latents (Figure~\ref{fig:lifting}). Appendix~\ref{app:notation} summarizes the notation.

\begin{figure}[t]
\centering
\includegraphics[width=\linewidth]{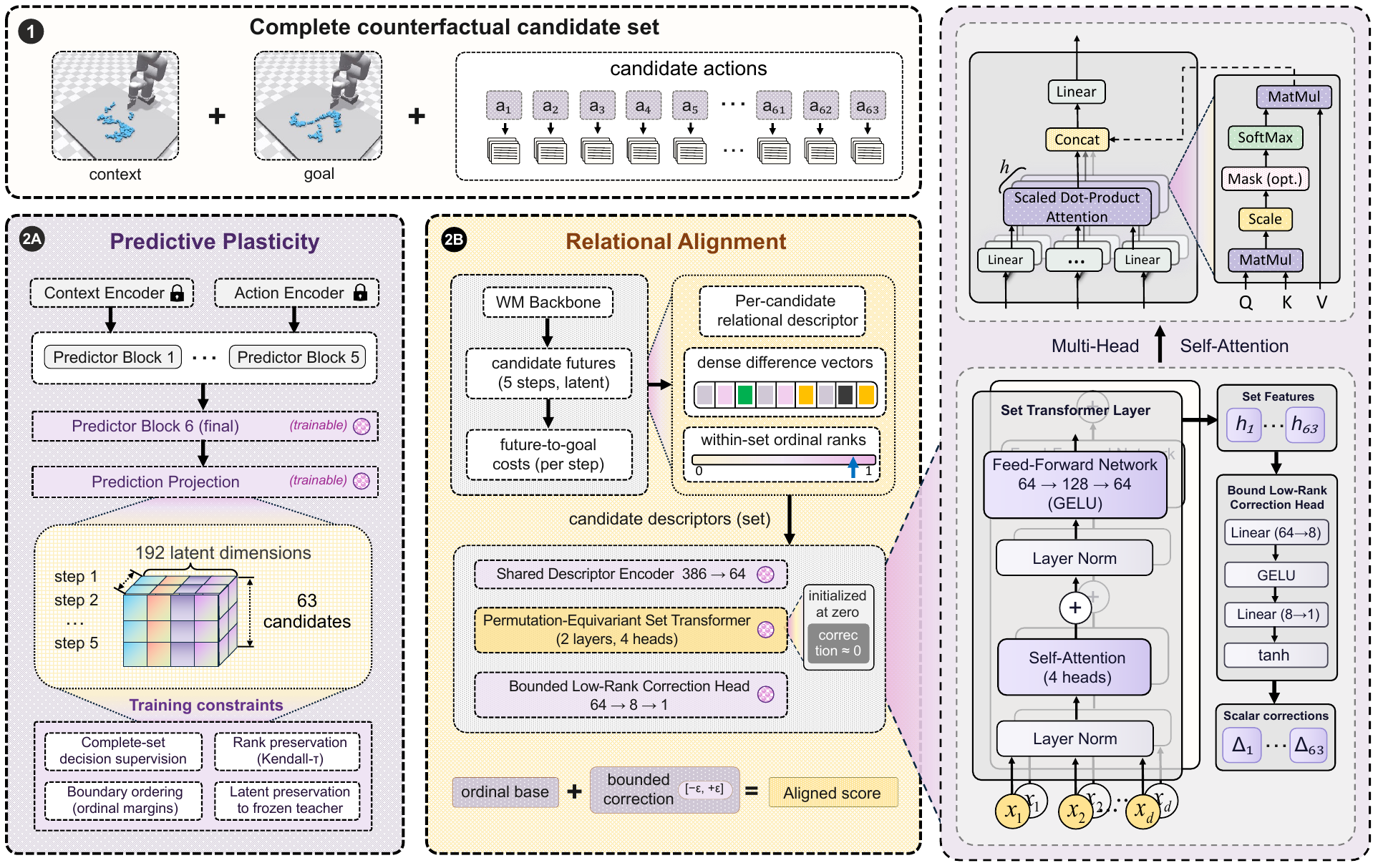}
\caption{\textbf{Learning decision-relevant relations over candidate futures.} Candidates share a context and goal. Predictive plasticity adapts the predictor tail and projection; relational alignment combines goal-relative descriptors and ordinal evidence through a permutation-equivariant set operator and a bounded correction. Both paths retain candidate identities. The diagram shows the 63-candidate configuration; Figure~\ref{fig:composition} continues with multi-geometry evidence and calibrated composition.}
\label{fig:method}
\end{figure}
\begin{figure}[t]
\noindent
\begin{minipage}[t]{0.49\linewidth}\vspace{0pt}
\centering
\includegraphics[width=\linewidth]{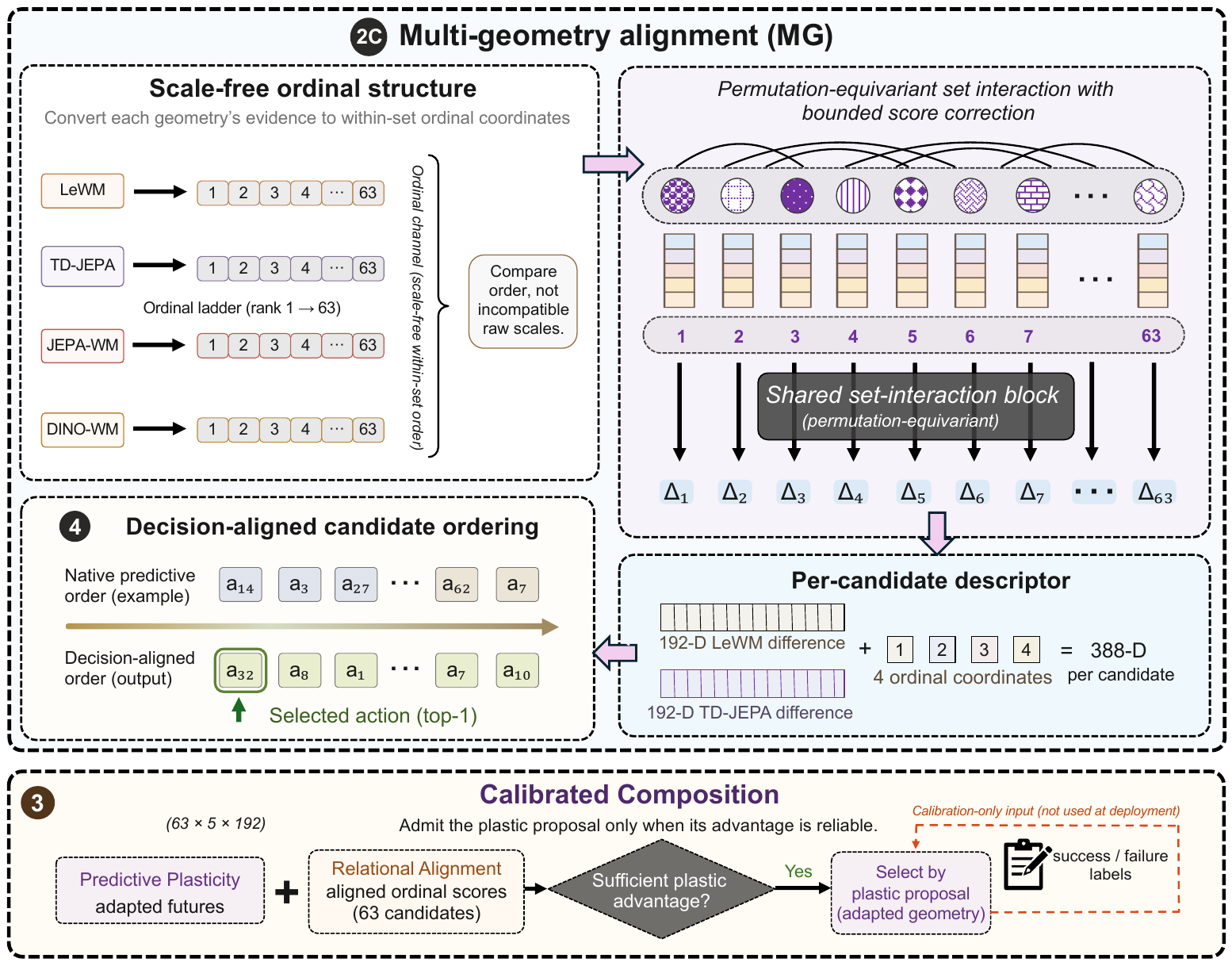}
\caption{\textbf{Combining geometries and proposals.} Scale-free ordinal evidence feeds the relational operator; calibrated composition admits a reliable plastic proposal over the relational default. Candidate identities remain unchanged.}
\label{fig:composition}
\end{minipage}\hfill
\begin{minipage}[t]{0.49\linewidth}\vspace{0pt}
\centering
\includegraphics[width=\linewidth]{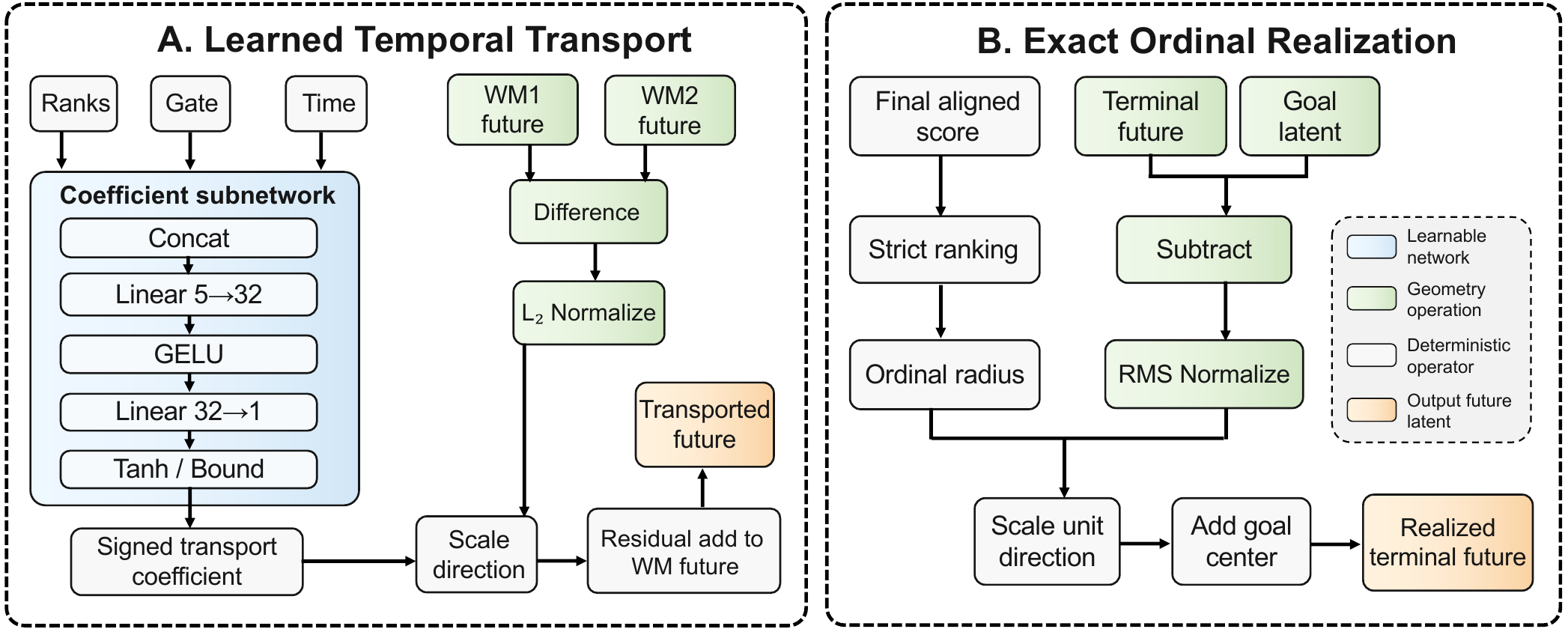}
\caption{\textbf{Realizing decision structure through native future geometry.} Temporal transport concatenates source ranks, inferred relational rank, gate state and time. A bounded coefficient network scales the normalized difference between same-action model futures before residual addition. The ordinal branch maps the final aligned score to a strict rank, sets the terminal root-mean-square radius, and restores the goal centre. Earlier steps remain unchanged under terminal realization. Colours distinguish learnable and deterministic operations. Realized futures recover the aligned action through native goal distance (Figure~\ref{fig:liftingdetail}).}
\label{fig:lifting}
\end{minipage}
\end{figure}

\subsection{Predictive evidence and bounded relational alignment}
\label{sec:relational}
For model $m$, candidate $i$ supplies a normalized goal-relative descriptor $d_i^m=\operatorname{LN}(\hat z^m_{i,H}-z_g^m)$ and an ordinal coordinate $r_i^m=(\rank_{\mathcal A}(c_i^m)-1)/(K-1)$. Ranks deterministically read pretrained native costs; execution labels supervise alignment only. Descriptors retain within-model directions; ranks are scale-free. The dual-model token is $v_i=[d_i^L;d_i^T;r_i^L;r_i^T]$, with base score $b_i=\alpha r_i^L+(1-\alpha)r_i^T$. A shared encoder and two permutation-equivariant Transformer layers process the complete set. Output $h_i$ receives a zero-initialized, rank-eight correction:
\begin{equation}
 \delta_i=\epsilon\tanh\!\left(W_{\rm up}\tanh(W_{\rm down}h_i)\right),
 \qquad s_i=b_i+\delta_i,\qquad \epsilon=0.2.
 \label{eq:bounded}
\end{equation}
The update preserves preferences across base-score gaps above $2\epsilon$ and selects within $2\epsilon$ of the base minimum (Proposition~\ref{prop:boundedalignment}). With $p_i=\exp(-s_i/T)/\sum_j\exp(-s_j/T)$, training maximizes decision mass on successful alternatives and sharpens local success/failure ordering:
\begin{equation}
 \mathcal L_R=-\log\sum_{i:y_i=1}p_i
 +\lambda_{\rm local}\mathcal L_{\rm local}
 +\lambda_{\rm trust}K^{-1}\sum_i\delta_i^2.
 \label{eq:relational}
\end{equation}
Together, the objective rewards successful candidates globally, sharpens success/failure ordering locally near the current decision boundary, and regularizes unnecessary global reordering. Local pairs use the current low-score subset; the operator receives all candidates. A calibrated margin gate selects between the relational and base winners. Full objectives and tie handling appear in Appendix~\ref{app:methoddetails}.

\begin{table}[t]
\centering
\caption{\textbf{Decision-aligned action selection across predictive world models.} Success (\%) on the PushT confirmation cohort ($n=256$), Reacher ($n=128$) and Granular ($n=64$), sharing candidate pools within tasks. Granular reports Chamfer distance (CD); the last column is an earlier PushT diagnostic ($n=32$).}
\label{tab:main}\label{tab:comprehensive}
\fontsize{6.7}{8.8}\selectfont
\setlength{\tabcolsep}{2.4pt}
\renewcommand{\arraystretch}{1.17}
\begin{tabularx}{\linewidth}{>{\hsize=3.6\hsize\linewidth=\hsize}L*{6}{>{\hsize=.5666667\hsize\linewidth=\hsize\raggedleft\arraybackslash}X}}
\toprule
\rowcolor{headerbg}\textbf{Method and publication} & \makecell{\textbf{PushT}\\\textbf{$\uparrow$}} & \makecell{\textbf{Reacher}\\\textbf{$\uparrow$}} & \makecell{\textbf{Granular}\\\textbf{CD $\downarrow$}} & \makecell{\textbf{Main}\\\textbf{$\uparrow$}} & \makecell{\textbf{Strict}\\\textbf{$\uparrow$}} & \makecell{\textbf{Earlier}\\\textbf{PushT $\uparrow$}}\\
\midrule
\rowcolor{sectiongray}\multicolumn{7}{l}{\textit{Pretrained predictive world models}}\\
JEPA-WM \pubref{Terver et al.}{TMLR 2026}{terver2026jepawm} & 85.16 & 84.38 & 0.370 & 34.38 & 12.50 & 81.25\\
DINO-WM \pubref{Zhou et al.}{ICML 2025}{zhou2025dinowm} & 82.03 & 80.47 & \textbf{0.341} & 35.94 & 7.81 & 75.00\\
\rowcolor{referencebg}LeWM \yearref{Maes et al.}{2026}{maes2026lewm} & 83.59 & 86.72 & 0.350 & 37.50 & 14.06 & 81.25\\
TD-JEPA \yearref{Bai et al.}{2026}{bai2026temporal} & 76.95 & 68.75 & 0.390 & 28.13 & 9.38 & 84.38\\
\midrule
\rowcolor{sectiongray}\multicolumn{7}{l}{\textit{Matched adaptations of published objectives}}\\
PEGrad \pubref{Peri et al.}{CoRL 2025}{peri2025pegrad} & 82.03 & 78.13 & 0.380 & 31.25 & 10.94 & 65.63\\
Var-JEPA \pubref{G\"ogl et al.}{ICML 2026}{gogl2026varjepa} & 80.08 & 75.00 & 0.400 & 29.69 & 9.38 & 62.50\\
Conflict-safe variational adaptation \pubref{G\"ogl et al.}{ICML 2026}{gogl2026varjepa} & 81.25 & 77.34 & 0.390 & 31.25 & 10.94 & 65.63\\
Expected-value planning \pubref{Enwerem et al.}{IROS 2026}{enwerem2026vnb} & 78.91 & 74.22 & 0.410 & 28.13 & 9.38 & 62.50\\
CVaR planning \pubref{Enwerem et al.}{IROS 2026}{enwerem2026vnb}; \pubref{Ni et al.}{ICML 2024}{ni2024cvar} & 79.69 & 75.78 & 0.400 & 29.69 & 9.38 & 62.50\\
\midrule
Four-geometry fusion & 85.94 & 82.03 & 0.360 & 37.50 & 15.63 & 78.13\\
\rowcolor{oursbg}\textbf{D-JEPA} & \textbf{87.89} & \textbf{93.75} & 0.358 & \textbf{39.06} & \textbf{18.75} & \textbf{93.75}\\
\bottomrule
\end{tabularx}
\tablenote{Publications identify source methods; values are from matched evaluations. Granular thresholds are 0.18 (main) and 0.12 (strict). PushT uses calibrated composition; Reacher and Granular use relational selection. The earlier PushT population is evaluated separately.}
\end{table}

\begin{figure}[t]
\centering
\begin{subfigure}[t]{0.48\linewidth}\vspace{0pt}
\includegraphics[width=\linewidth]{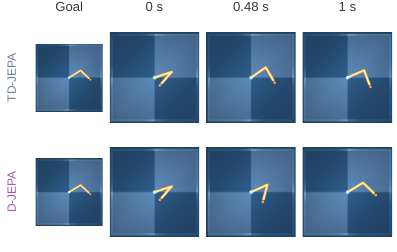}
\caption{Articulated control.}
\end{subfigure}\hfill
\begin{subfigure}[t]{0.48\linewidth}\vspace{0pt}
\includegraphics[width=\linewidth]{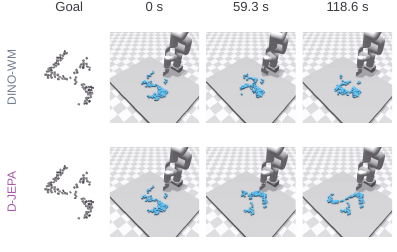}
\caption{Granular redistribution.}
\end{subfigure}
\caption{\textbf{Decision alignment across distinct physical dynamics.} Matched initial states, goals and execution horizons reveal the physical consequences of different selections. D-JEPA reaches the Reacher target and redistributes the Granular particles toward the prescribed goal. Granular panels share a timeline, with the completed rollout's terminal state held through the remaining timestamps.}
\label{fig:additionaldynamics}\label{fig:reacherqual}\label{fig:granularqual}
\end{figure}

\begin{figure}[t]
\begin{minipage}[t]{0.49\linewidth}\vspace{0pt}
\captionof{table}{\textbf{Transfer across robot embodiments.} Success (\%) with native selection and D-JEPA.}
\label{tab:realrobot}\label{tab:robotwin}
{\fontsize{6.8}{9.0}\selectfont\setlength{\tabcolsep}{2.2pt}\renewcommand{\arraystretch}{1.10}
\begin{tabularx}{\linewidth}{Lrrrr}
\toprule
\rowcolor{headerbg}\textbf{Task} & \textbf{$n$} & \textbf{Native} & \textbf{D-JEPA} & \textbf{$\Delta$ pp}\\
\midrule
\rowcolor{sectiongray}\multicolumn{5}{l}{\textit{RoboTwin}\enspace\paperbadge{VLA}}\\
Grab roller & 128 & 71.09 & \textbf{83.59} & +12.50\\
Bread to skillet & 128 & 64.06 & \textbf{78.91} & +14.84\\
Object to cabinet & 128 & 59.38 & \textbf{75.00} & +15.63\\
Block handover & 128 & 52.34 & \textbf{69.53} & +17.19\\
\rowcolor{sectiongray}\textit{Overall} & 512 & 61.72 & \textbf{76.76} & +15.04\\
\midrule
\rowcolor{sectiongray}\multicolumn{5}{l}{\textit{Physical PiPER}\enspace\paperbadge{V-JEPA 2-AC}}\\
Real PushT & 50 & 56.0 & \textbf{74.0} & +18.0\\
Two-cube stack & 50 & 72.0 & \textbf{88.0} & +16.0\\
\rowcolor{sectiongray}\textit{Overall} & 100 & 64.0 & \textbf{81.0} & +17.0\\
\bottomrule
\end{tabularx}}
\tablenote{Physical trials use paired starts and a shared V-JEPA~2-AC planner. Gains/losses are 12/3 on PushT and 10/2 on stacking.}

\end{minipage}\hfill
\begin{minipage}[t]{0.48\linewidth}\vspace{0pt}
\captionof{figure}{\textbf{Aligned bimanual grasping.} D-JEPA completes the grasp from an initial state where native selection fails.}
\label{fig:robotwincase}\label{fig:foundationcases}
\includegraphics[width=\linewidth]{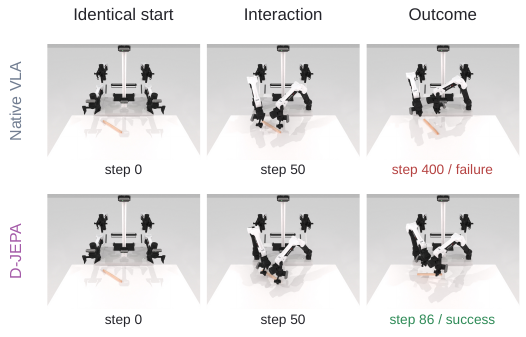}
\end{minipage}
\end{figure}

\begin{figure}[t]
\begin{minipage}[t]{0.49\linewidth}\vspace{0pt}
\captionof{table}{\textbf{Driving trajectory selection.} PDMS ($\uparrow$); 32 shared trajectories per scene, evaluated under the same traffic context.}
\label{tab:driving}
{\fontsize{6.8}{9.0}\selectfont\setlength{\tabcolsep}{2pt}\renewcommand{\arraystretch}{1.85}
\begin{tabularx}{\linewidth}{p{19pt}*{4}{>{\raggedleft\arraybackslash}X}}
\toprule
\rowcolor{headerbg}\textbf{Scene} & \textbf{Drive-JEPA} & \textbf{D-JEPA} & \textbf{$\Delta$} & \textbf{Oracle}\\
\midrule
S1 & 0.00 & \textbf{80.00} & +80.00 & 100.00\\
S2 & 58.33 & \textbf{95.83} & +37.50 & 100.00\\
S3 & 58.33 & \textbf{95.83} & +37.50 & 100.00\\
S4 & 58.33 & \textbf{99.17} & +40.84 & 100.00\\
S5 & 58.33 & \textbf{99.17} & +40.84 & 100.00\\
S6 & 83.33 & \textbf{99.67} & +16.34 & 100.00\\
S7 & 84.86 & \textbf{97.73} & +12.87 & 100.00\\
\rowcolor{sectiongray}\textit{Mean} & 57.36 & \textbf{95.34} & +37.98 & 100.00\\
\bottomrule
\end{tabularx}}
\tablenote{Drive-JEPA uses native ranking; Oracle is the candidate-outcome ceiling. Each row is a distinct difficult scene. The seven scenes span three source logs and receive equal weight in the mean (Appendix~\ref{app:metrics}).}

\end{minipage}\hfill
\begin{minipage}[t]{0.48\linewidth}\vspace{0pt}
\captionof{figure}{\textbf{Aligned driving trajectories.} D-JEPA maintains greater clearance. Shared logged camera observations appear above simulated candidate executions at matched scales and times.}
\label{fig:drivingcase}
\includegraphics[width=\linewidth]{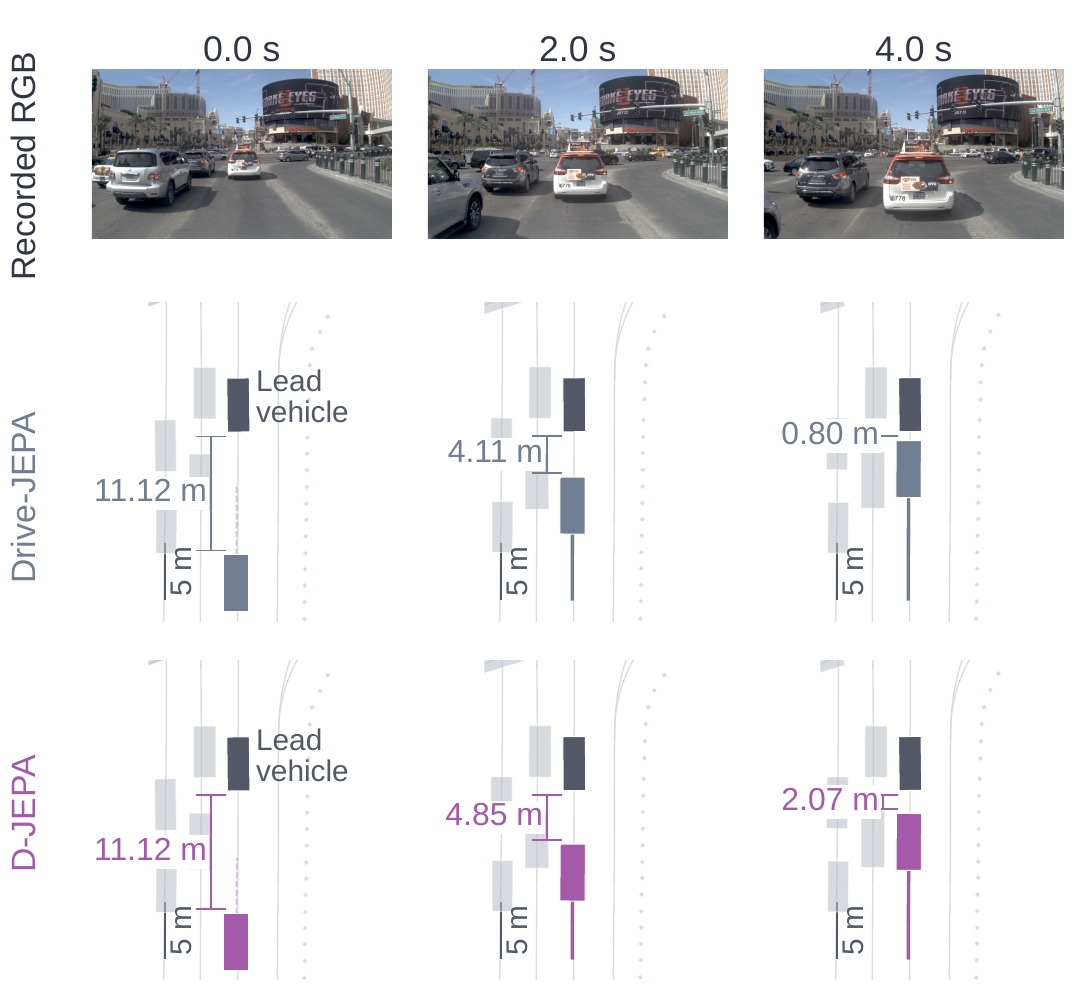}
\end{minipage}
\end{figure}

\begin{table}[tbp]
\centering
\caption{\textbf{Decision alignment under shape and visual changes.} Both experiments use a learned task-local relational model and matched predictive features. Entries are success percentages; the reference controls are native predictive cost and calibrated visual/proprioceptive rank fusion.}
\label{tab:generalization}
\tablefont
\begin{tabularx}{\linewidth}{p{34mm}*{5}{C}}
\toprule
\rowcolor{sectiongray}\multicolumn{6}{l}{\textit{PushObj: three unseen geometries}\enspace\paperbadge{Held-out shape}}\\
\rowcolor{headerbg}\textbf{Method} & \textbf{I} & \textbf{Small T} & \textbf{Square} & \textbf{Mean} & \makecell{\textbf{$\Delta$ vs.}\\\textbf{fusion}}\\
\midrule
Native predictive selection & 22.00 & 39.00 & 24.00 & 28.33 & $-16.34$\\
\rowcolor{referencebg}Calibrated rank fusion & 37.00 & 61.00 & 36.00 & 44.67 & 0.00\\
\rowcolor{oursbg}\textbf{D-JEPA} & \bestcell{52.00} & \bestcell{66.00} & \bestcell{43.00} & \bestcell{53.67} & \bestcell{$+9.00$}\\
\bottomrule
\end{tabularx}
\vspace{4pt}
\fontsize{6.6}{8.55}\selectfont
\begin{tabularx}{\textwidth}{p{31mm}*{9}{C}}
\toprule
\rowcolor{sectiongray}\multicolumn{10}{l}{\textit{PushT: fresh starts under seven appearance conditions}\enspace\paperbadge[badgeyellow]{Visual shift}}\\
\rowcolor{headerbg}\textbf{Method} & \textbf{Clean} & \textbf{Blur} & \makecell{\textbf{Salt}\\\textbf{noise}} & \textbf{Dark} & \makecell{\textbf{Object}\\\textbf{colour}} & \makecell{\textbf{Goal}\\\textbf{colour}} & \makecell{\textbf{Pusher}\\\textbf{colour}} & \textbf{Mean} & \makecell{\textbf{$\Delta$ vs.}\\\textbf{fusion}}\\
\midrule
Native predictive selection & 64.00 & 66.00 & 68.00 & 72.00 & 24.00 & 48.00 & 54.00 & 56.57 & $-6.29$\\
\rowcolor{referencebg}Calibrated rank fusion & 70.00 & 76.00 & 72.00 & 74.00 & 30.00 & 56.00 & 62.00 & 62.86 & 0.00\\
\rowcolor{oursbg}\textbf{D-JEPA} & \bestcell{78.00} & \bestcell{82.00} & \bestcell{82.00} & \bestcell{82.00} & \bestcell{54.00} & \bestcell{60.00} & \bestcell{74.00} & \bestcell{73.14} & \bestcell{$+10.29$}\\
\bottomrule
\end{tabularx}
\tablenote{Entries are success percentages. Shape evaluation uses 100 held-out starts per geometry; visual evaluation uses 50 fresh identities per condition. Both protocols select from 63 candidates and execute the complete 25-control sequence.}
\end{table}

\begin{figure}[t]
\centering
\begin{subfigure}[t]{0.24\linewidth}\includegraphics[width=\linewidth]{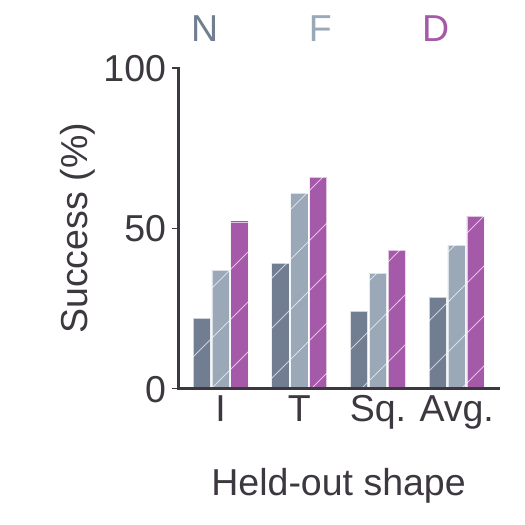}\caption{Shape success.}\end{subfigure}\hfill
\begin{subfigure}[t]{0.24\linewidth}\includegraphics[width=\linewidth]{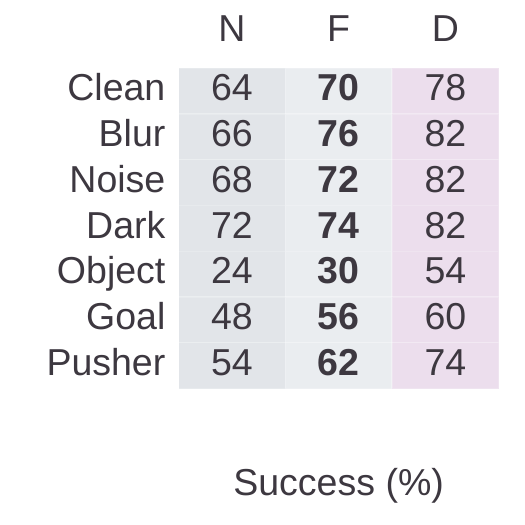}\caption{Visual success.}\end{subfigure}\hfill
\begin{subfigure}[t]{0.24\linewidth}\includegraphics[width=\linewidth]{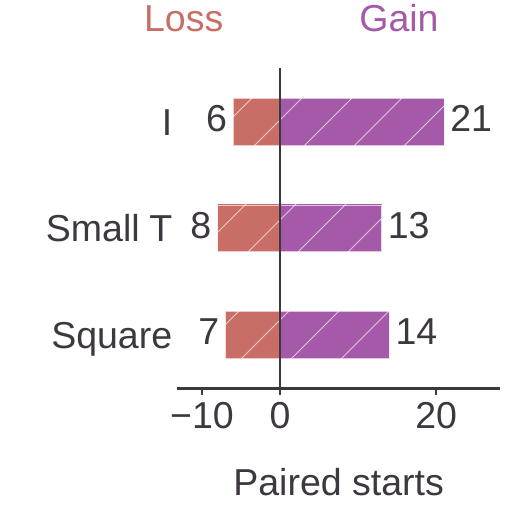}\caption{Shape outcomes.}\end{subfigure}\hfill
\begin{subfigure}[t]{0.24\linewidth}\includegraphics[width=\linewidth]{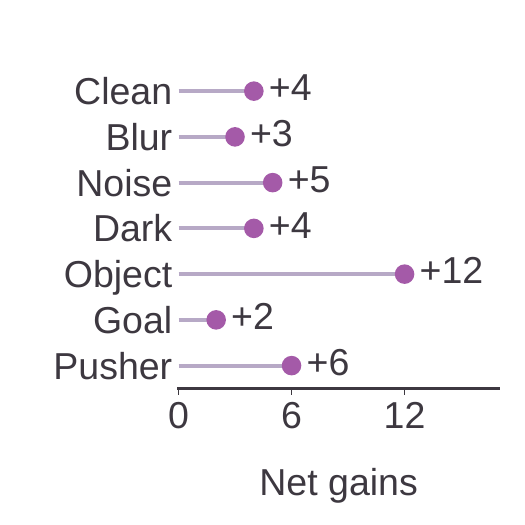}\caption{Visual outcomes.}\end{subfigure}
\caption{\textbf{Decision alignment under geometric and visual change.} (a) Grouped success bars and (b) a condition-by-method success matrix compare native selection (N), calibrated fusion (F) and D-JEPA (D). (c) Diverging bars separate gains from losses; (d) lollipops show their net difference relative to fusion. Each displayed group has positive net gain.}
\label{fig:generalizationvisual}
\end{figure}

\begin{table}[tbp]
\centering
\caption{\textbf{Ablations of decision mechanisms and predictive evidence.} Alignment mechanisms use 256 PushT confirmation starts; predictive-evidence ablations use 128 starts.}
\label{tab:ablation}
\tablefont
\begin{tabularx}{\linewidth}{>{\hsize=1.40\hsize\linewidth=\hsize\raggedright\arraybackslash}X >{\hsize=1.15\hsize\linewidth=\hsize\raggedright\arraybackslash}X >{\hsize=1.15\hsize\linewidth=\hsize\raggedright\arraybackslash}X >{\hsize=.75\hsize\linewidth=\hsize\centering\arraybackslash}X >{\hsize=.55\hsize\linewidth=\hsize\centering\arraybackslash}X}
\toprule
\rowcolor{headerbg}
\textbf{Configuration} & \textbf{Predictive evidence} & \textbf{Decision mechanism} & \textbf{Success (\%)} & \textbf{$\Delta$ (pp)}\\
\midrule
\rowcolor{sectiongray}\multicolumn{5}{l}{\textit{Alignment mechanisms}}\\
TD-JEPA & \paperbadge{Temporal} & \paperbadge[badgeyellow]{Native distance} & 76.95 & 0.00\\
Predictive plasticity & \paperbadge{Adapted temporal} & \paperbadge[badgeyellow]{Native distance} & 79.69 & $+2.74$\\
Relational alignment & \paperbadge{Dual source} & \paperbadge[badgeyellow]{Set relation} & 87.11 & $+10.16$\\
Calibrated composition & \paperbadge{Dual + adapted} & \paperbadge[badgeyellow]{Gated composition} & \bestcell{87.89} & $+10.94$\\
\midrule
\rowcolor{sectiongray}\multicolumn{5}{l}{\textit{Matched predictive-source ablation}}\\
LeWM descriptors & \paperbadge{LeWM} & \paperbadge[badgeyellow]{Set relation} & 84.38 & 0.00\\
Temporal descriptors & \paperbadge{TD-JEPA} & \paperbadge[badgeyellow]{Set relation} & 83.59 & $-0.79$\\
\rowcolor{oursbg}Dual descriptors & \paperbadge{LeWM + TD-JEPA} & \paperbadge[badgeyellow]{Set relation} & \bestcell{93.75} & $+9.37$\\
\midrule
\rowcolor{sectiongray}\multicolumn{5}{l}{\textit{Expanded predictive evidence}}\\
DINO-WM & \paperbadge{DINO-WM} & \paperbadge[badgeyellow]{Native distance} & 95.31 & $-3.13$\\
JEPA-WM & \paperbadge{JEPA-WM} & \paperbadge[badgeyellow]{Native distance} & 98.44 & 0.00\\
Four-geometry fusion & \paperbadge{Four geometries} & \paperbadge[badgeyellow]{Convex fusion} & \bestcell{99.22} & $+0.78$\\
\rowcolor{oursbg}Multi-geometry alignment & \paperbadge{Four geometries} & \paperbadge[badgeyellow]{Set relation} & \bestcell{99.22} & $+0.78$\\
\bottomrule
\end{tabularx}
\tablenote{$\Delta$ uses TD-JEPA, LeWM descriptors and JEPA-WM as the three block references. Confirmation counts for the alignment mechanisms appear in Table~\ref{tab:independentmodules}.}
\end{table}

\begin{figure}[t]
\centering
\begin{subfigure}[t]{0.24\linewidth}\includegraphics[width=\linewidth]{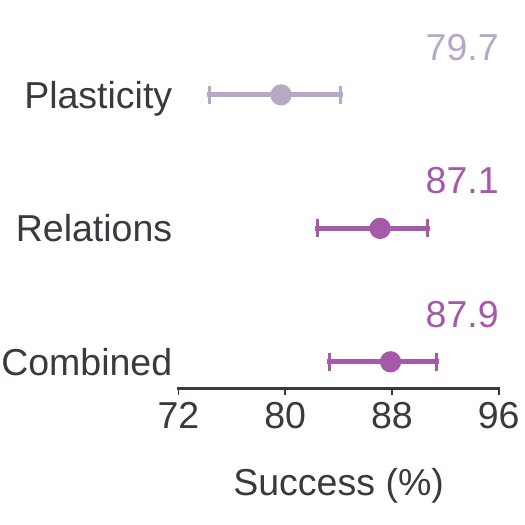}\caption{Components.}\end{subfigure}\hfill
\begin{subfigure}[t]{0.24\linewidth}\includegraphics[width=\linewidth]{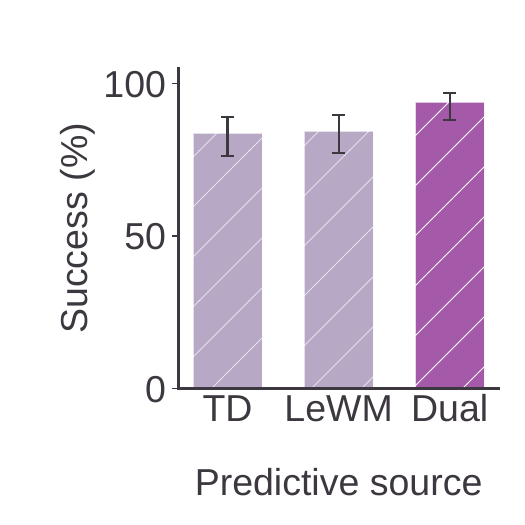}\caption{Predictive sources.}\end{subfigure}\hfill
\begin{subfigure}[t]{0.24\linewidth}\includegraphics[width=\linewidth]{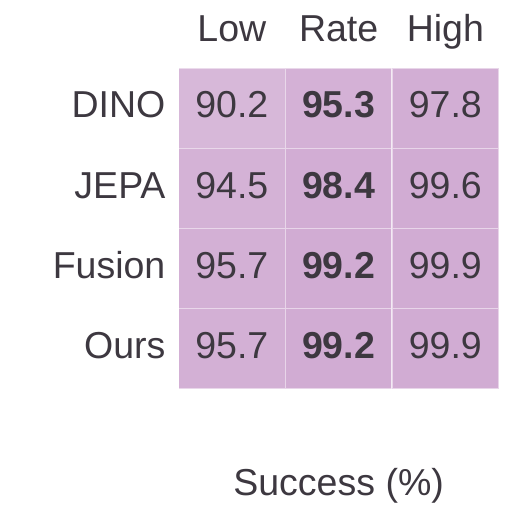}\caption{Geometries.}\end{subfigure}\hfill
\begin{subfigure}[t]{0.24\linewidth}\includegraphics[width=\linewidth]{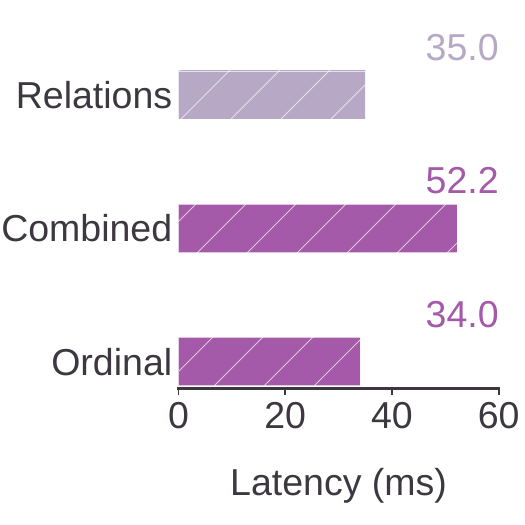}\caption{Inference cost.}\end{subfigure}
\caption{\textbf{Relational alignment benefits from complementary predictive evidence.} (a) Component confirmation; Combined denotes calibrated composition. (b) Predictive-source ablation. (c) Lower limit, success and upper limit across geometries. These panels use Wilson 95\% intervals. (d) Median end-to-end latency. Components, sources and geometries follow the evaluations in Tables~\ref{tab:independentmodules} and~\ref{tab:ablation}.}
\label{fig:componentsvisual}
\end{figure}

\begin{figure}[t]
\begin{minipage}[t]{0.49\linewidth}\vspace{0pt}
\captionof{table}{\textbf{Native-distance deployment.} PushT confirmation evaluation.}
\label{tab:deployment}
{\fontsize{6.9}{9.2}\selectfont\setlength{\tabcolsep}{3pt}
\begin{tabularx}{\linewidth}{Lrr}
\toprule
\rowcolor{headerbg}\textbf{Configuration} & \textbf{Success (\%)} & \textbf{Latency (ms)}\\
\midrule
\rowcolor{referencebg}Relational readout & 87.11 & 35.03\\
Calibrated composition & \textbf{87.89} & 52.16\\
\rowcolor{oursbg}Ordinal realization & 87.11 & \textbf{34.02}\\
\bottomrule
\end{tabularx}}
\tablenote{Interface verification: native goal distance recovers 100\% of relational action choices and candidate ranks after ordinal realization. Median end-to-end inference time is measured per start.}

\smallskip
{\small Figure~\ref{fig:qualitative} shows why decision-aligned predictive geometry matters: both original predictive criteria prefer failing actions, although a successful future is available in the same set under matched execution conditions. D-JEPA identifies that future through its relations to the other candidates. The representation exposes this choice through the predictor's latent-distance interface.}
\end{minipage}\hfill
\begin{minipage}[t]{0.48\linewidth}\vspace{0pt}
\captionof{figure}{\textbf{Different choices, different futures.} Matched PushT executions share a start and goal.}
\label{fig:qualitative}
\centering\includegraphics[width=.97\linewidth]{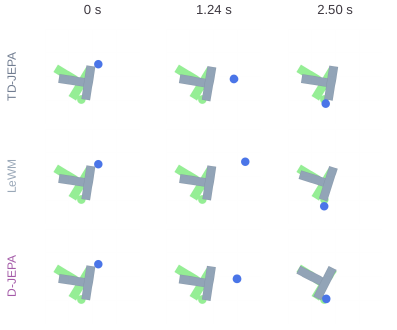}
\end{minipage}
\end{figure}

\subsection{Complementary adaptation and predictive geometries}
\label{sec:plasticity}\label{sec:geometry}
Predictive plasticity adapts only the final TD-JEPA predictor block and projection, allowing future geometry to respond to decision supervision. Success-mass and boundary-ordering objectives combine with ranking and latent-consistency penalties to keep adaptation localized. The resulting native-distance winner supplies a complementary proposal to relational alignment. Calibrated composition accepts this proposal over the relational default when its score advantage exceeds the calibrated threshold.

The ordinal interface extends the relational operator to additional predictive geometries. The four-model instance appends JEPA-WM and DINO-WM ranks to the dual-model descriptors, combining model-specific directions with scale-free evidence of preference. Task-specific predictive evidence also supports alignment over action-producing foundations; Appendix~\ref{app:applications} details their inputs and readouts. Across these interfaces, relations among available futures guide action selection.

\subsection{Representation lifting}
\label{sec:lifting}
Representation lifting realizes learned decision structure directly in future latents through the complementary mechanisms in Figure~\ref{fig:lifting}. For Reacher's same-dimensional models, a small time-conditioned network predicts a signed coefficient $\beta_{i,t}$ bounded by $\rho=0.1$. The same-action temporal update is
\begin{equation}
 \tilde z^T_{i,t}=\hat z^T_{i,t}+\beta_{i,t}
 \frac{\hat z^L_{i,t}-\hat z^T_{i,t}}
 {\max(\|\hat z^L_{i,t}-\hat z^T_{i,t}\|_2,\eta)}.
 \label{eq:transport}
\end{equation}
The network uses source ranks, inferred relational rank, gate state and time, refining five-step trajectories with candidate and time correspondence. Appendix~\ref{app:representation} reports the representation diagnostic.

For PushT, let $\pi_i$ be the strict rank of the final gated score and $u_i$ its original terminal goal-relative direction, normalized to unit root-mean-square norm. Exact ordinal realization defines
\begin{equation}
 \tilde z^T_{i,H}=z_g^T+\frac{\pi_i}{K+1}u_i,\qquad
 \tilde z^T_{i,t}=\hat z^T_{i,t}\quad(t<H).
 \label{eq:ordinal}
\end{equation}
Its native mean-squared goal distance is $(\pi_i/(K+1))^2$, making the learned decision structure recoverable through native goal distance. Selecting the nearest realized future recovers the aligned action through the native planning interface. A self-contained checkpoint includes the predictive and relational paths; numerical rank identity and the zero-displacement convention appear in Appendix~\ref{app:representation}.

\section{Experiments}
\label{sec:experiments}
We evaluate four questions: whether decision alignment improves executed choices across distinct physical dynamics; whether it transfers to action-producing foundations and physical robots; whether learned relations remain effective under geometric and visual changes; and whether aligned decisions can be realized through native future-latent geometry. Core tasks are PushT, DMC-Reacher~\citep{tassa2018control} and Granular goal shaping (Table~\ref{tab:main}), with a separate PushT mechanism population. Transfer covers RoboTwin, physical PiPER manipulation and focused driving scenes. Shift evaluations use unseen PushObj shapes and fresh PushT starts under known appearance conditions.

Methods within each comparison share starts, goals, candidate actions and execution horizons. Candidate executions restore the same initial state. Disjoint fitting, calibration and evaluation identities support learning, model selection and testing, respectively; evaluation uses the calibrated selection rules. Controls include pretrained models, calibrated fusion, matched optimization and uncertainty baselines, and task-specific preservation variants. Physical-robot and driving comparisons retain the reference planner and search budget.
Success is measured over starts or paired physical trials. Appearance conditions retain their repeated-measures structure; driving reports an equal-weight scene mean. Appendix~\ref{app:implementation} details candidate construction and fitting, Appendices~\ref{app:metrics} and~\ref{app:baselines} define metrics and controls, and Appendix~\ref{app:statistics} reports paired counts and bootstrap intervals.

\section{Results}
\label{sec:results}
\subsection{Action selection across physical tasks}
D-JEPA reaches 87.89\% success on PushT and 93.75\% on Reacher under identical candidate availability, outperforming pretrained predictive models and matched adaptation controls (Table~\ref{tab:main}). PushT benefits from calibrated composition; Reacher improves through relational selection. In Granular manipulation, the largest gain occurs at the strict goal threshold, demonstrating improved thresholded attainment. Figure~\ref{fig:additionaldynamics} connects these gains to articulated motion and distributed-particle control; Appendix~\ref{app:statistics} reports paired uncertainty and native task costs. Across four RoboTwin tasks, D-JEPA raises average native VLA success from 61.72\% to 76.76\%; on two physical PiPER tasks, success rises from 64.0\% to 81.0\% with the same V-JEPA~2-AC planner (Table~\ref{tab:robotwin}). Relational alignment outperforms fixed correction and scalar confidence gating, and the preservation gate provides an additional benefit (Table~\ref{tab:robotwincontrols}). Figure~\ref{fig:robotwincase} shows grasping behavior. In seven focused driving scenes, mean PDMS rises from 57.36 to 95.34 under shared candidates (Table~\ref{tab:driving}). Figure~\ref{fig:drivingcase} shows how the aligned trajectory maintains more clearance as traffic evolves.

\subsection{Alignment under shape and visual changes}
D-JEPA improves every held-out PushObj geometry and every evaluated appearance condition (Table~\ref{tab:generalization}), with object-colour change yielding the largest appearance gain. Figure~\ref{fig:generalizationvisual} shows positive net gains throughout; Table~\ref{tab:shiftcounts} gives paired counts and Figure~\ref{fig:generalizationcases} shows executions. These evaluations measure transfer to unseen shapes and performance on fresh starts within the appearance families used for task-local fitting.

\subsection{Ablations of alignment and predictive evidence}
Relational alignment is the strongest individual intervention in the matched ablation (Table~\ref{tab:ablation}). Predictive adaptation contributes through calibrated composition, improving PushT confirmation success (Table~\ref{tab:independentmodules}). Under the same relational architecture on the 128-start mechanism population, dual LeWM and TD-JEPA evidence reaches 93.75\%, compared with 84.38\% and 83.59\% from either source alone. The same relational operator incorporates four predictive geometries and achieves the success rate of calibrated four-source fusion. Figure~\ref{fig:componentsvisual} connects the ablations to uncertainty and deployment cost. D-JEPA improves as candidate availability increases (Appendix~\ref{app:budget}), consistent with learning to exploit relations among a richer set of plausible futures.

\subsection{Realizing decisions in future geometry}
Exact ordinal realization encodes the learned relational order in future geometry: native goal distance reproduces aligned decisions at comparable latency (Table~\ref{tab:deployment}). On Reacher, temporal transport learns signed updates across five future steps within the prescribed bound (Appendix~\ref{app:representation}; Figure~\ref{fig:representationdetail}). These mechanisms expose decision structure through future-latent outputs; Figure~\ref{fig:qualitative} illustrates the physical consequence of the aligned choice.

\section{Conclusion}
We introduced D-JEPA, a decision-aligned latent world model motivated by a simple observation: predictive geometry can remain globally informative while assigning misleading goal distances to the few candidate futures that determine behavior. D-JEPA learns these decision-relevant relations from executed outcomes through bounded set-wise alignment. Restricted predictor adaptation and a shared ordinal interface extend the core relation operator across complementary predictive geometries. The learned decision structure can be realized directly in JEPA-compatible future representations, recovering aligned action choices through native latent distance. Evaluations spanning latent control, manipulation, pretrained action models, physical robots and focused driving scenes demonstrate the effectiveness of decision alignment, with further gains under geometric and visual changes. Together, these findings connect predictive world models to effective control by aligning latent geometry with realized action consequences.
\label{sec:conclusion}

\FloatBarrier
\label{sec:mainend}
\clearpage
\ifdefined\djepaauthorversion
\else
\section*{AI Use Statement}
Codex was used to assist with manuscript writing and language polishing.
\fi
\phantomsection\label{sec:references}
\bibliographystyle{iclr2027_conference}
\bibliography{references}
\clearpage
\appendix
\section{Protocols and Implementation}
\label{app:implementation}
\subsection{Evaluation units and candidate construction}
A start identifies a restored environment state, an observation context and a goal. A candidate identifies a complete stored action sequence for that start. Model-specific scores are aligned by persistent candidate IDs, and simulator outcomes are joined to the chosen IDs. Within a comparison, methods share starts, goals, candidate availability and execution duration. Statistical units follow the task protocol: starts define decision instances, while repeated conditions and visualizations remain associated with their originating starts. Task populations are reported separately.

The PushT confirmation cohort targets decisions with nontrivial candidate boundaries. Its pre-established eligibility rule specifies success/failure candidate structure and a sufficiently small pretrained-score boundary gap. The 256 confirmation starts are drawn from identities disjoint from fitting, calibration and previous development; every compared model evaluates this cohort.

\subsection{Relational and predictive learning}
The $386\!\to\!64$ candidate encoder uses LayerNorm and GELU. Each of the two pre-normalized Transformer layers has four attention heads, a 128-dimensional feed-forward block and zero dropout. The rank-eight output is bounded by 0.2 and initialized to reproduce the base score. The core PushT loss uses temperature 0.05, success/failure margin 0.02, local pair weight 0.25 and squared-correction weight 0.1. The decision subset contains 16 candidates. The calibrated fusion coefficient is 0.42; the recorded relational gate threshold is $-0.0316252634$. The exact-realization implementation rounds the gate advantage to four decimal places before its strict threshold comparison to reproduce the released numerical decision rule.

Fusion is fitted on the designated training pool over a 101-point grid, with five-fold diagnostics; relational weights are fitted on the fitting subset, and the checkpoint and gate are selected on a 64-start calibration subset. The recorded relational configuration runs 1,000 AdamW updates with eight starts per batch, learning rate $3\times10^{-4}$, weight decay $10^{-4}$, gradient clipping at 1.0, and calibration evaluation every 50 updates. Fitting, checkpoint selection and gate calibration use their designated subsets; the 256-start confirmation cohort provides the final evaluation. Persistent-ID tie breaking is retained in scoring, ordinal conversion and realization.

Only the final predictor block and prediction projection receive gradients. The loss preserves both the teacher trajectory and score relationships outside the targeted decision subset. The implementation combines a non-tail distributional KL penalty with a score-difference consistency term, alongside latent MSE and success/failure boundary ordering. The recorded configuration uses 384 fitting starts, 128 calibration starts, 600 updates, learning rate $2\times10^{-6}$, weight decay $10^{-3}$ and gradient clipping at 0.25. Boundary, rank-preservation and latent-preservation weights are 0.5, 0.25 and 1.0. Its standalone readout minimizes the adapted predictor's native goal cost. The composition uses the recorded predictive gate threshold $0.0065339543$ on the calibrated relational-default/predictive-proposal score comparison.

\subsection{Task-specific alignment and temporal transport}
The unseen-shape and visual-change instances use visual, proprioceptive and native-joint within-set ranks as three input coordinates. The encoder is $3\!\to\!64$; set depth, attention heads, rank-eight output and correction bound match the relational construction. These instances use full-list success mass, local ordering with weight 0.25 and correction preservation with weight 0.01. The base is calibrated visual/proprioceptive rank fusion. Fitting and calibration are separate for each task-local checkpoint.

The transport head takes five scalars per candidate and time step: LeWM rank, TD-JEPA rank, inferred relational rank, gate state and normalized time. A 32-dimensional GELU hidden layer maps them to a coefficient bounded by 0.1. Training orders candidates by success first, then physical task cost, with candidate ID breaking ties. The best candidate under this order supplies a cross-entropy target for both the refined score and the transported native criterion. Both also receive all-pairs softplus ordering losses with weight 0.25; a future-preservation MSE has weight 0.02. The temperature is 0.1. Physical labels define the training targets; at deployment, the head receives source ranks, inferred relational rank, gate state and time.

Transport training uses 384 starts and 1,500 updates. The selected 1,200-update snapshot provides representation measurements on 128 calibration starts. Physical success is evaluated on a separate 128-start test population using the relational score to select actions.

\subsection{Expanded populations and timing}
Core evaluations use 256 independent PushT starts, 128 Reacher starts and 64 Granular starts, with a separate 128-start PushT mechanism population. RoboTwin includes 128 starts per task, PiPER has 50 paired trials per task, and the seven driving scenes come from three source logs with 32 trajectories per scene. Core model comparisons share candidate availability; task-local controls use the same fitted predictive evidence. Exact demonstrations are excluded from deployable pools where specified by the task protocol.

PushObj uses 512 fitting starts and 256 calibration starts on shapes T, L, Z and +. The independent shape confirmation combines two 150-start confirmation sets, with 100 starts each for I, small T and square. The earlier 150-start development set is excluded. Appearance learning uses 126 fitting and 63 calibration base starts under seven known conditions. The resulting checkpoint is evaluated on 50 new base starts under each condition. Each selected sequence executes all 25 controls; the exact demonstration sequence is excluded from the 63 deployable candidates.

Table~\ref{tab:deployment} uses batch size 16, two warm-up repeats and ten timed repeats over the same 256 label-free PushT inputs. Latency measures end-to-end model inference per start, including the configured predictive and relational paths. The corresponding checkpoint-file counts are three, four and one; their recorded logical package counts are four, five and one. A single packaged artifact can include multiple predictive backbones.

\subsection{Metric definitions and statistical units}
\label{app:metrics}
For start $j$, let $i_j$ be the selected candidate and $y_{j,i_j}$ its executed binary outcome. Success is $100n^{-1}\sum_{j=1}^{n}y_{j,i_j}$. With $G$ paired gains and $L$ paired losses, the improvement is $100(G-L)/n$ percentage points; shared successes and failures remain in the denominator. Shape groups use distinct starts, whereas the seven appearance conditions reuse the same 50 base identities. PiPER reports an equal-weight average of two task success rates. RoboTwin pools four equally sized task populations. Driving reports the equal-weight mean of seven scene-level PDMS values on a 0--100 scale, rather than a binary success rate.

Figure~\ref{fig:mismatch} compares predicted latent goal costs with latent goal costs computed from realized observations using the same model. For shortlist size $k$, each model retains its $k$ lowest predicted-cost candidates within each start. The within-start statistic averages 96 separate Spearman coefficients; the pooled statistic concatenates the shortlisted predicted/realized cost pairs before computing one coefficient. Table~\ref{tab:diagnostic} reports both definitions. Neither statistic is a correlation with binary success labels.

Granular uses all particle positions $P$ and goal positions $Q$:
\begin{equation}
 \operatorname{CD}(P,Q)=\frac{1}{|P|}\sum_{p\in P}\min_{q\in Q}\|p-q\|_2
 +\frac{1}{|Q|}\sum_{q\in Q}\min_{p\in P}\|q-p\|_2.
 \label{eq:chamfer}
\end{equation}
The two directed means are added, using Euclidean rather than squared distances. Main and strict attainment apply thresholds of 0.18 and 0.12 to the same quantity. Physical stacking requires the released upper cube to remain supported by the lower cube after arm withdrawal for the prescribed observation interval; grasping or lifting alone is not success.

Single-method success intervals in Figures~\ref{fig:componentsvisual} and~\ref{fig:granularmetrics} are Wilson 95\% intervals over starts. Paired differences in Table~\ref{tab:paired} use 10,000 bootstrap resamples of whole paired starts, preserving each pair's outcomes. Candidate-budget bands in Figure~\ref{fig:budgetorder} instead span the minimum and maximum over 16 shared subset seeds. Latency in Table~\ref{tab:deployment} is median end-to-end inference time per start under the common batch/warm-up protocol above, excluding simulator execution.

\subsection{Baseline mechanisms and comparison identities}
\label{app:baselines}
The first block of Table~\ref{tab:main} evaluates native LeWM, TD-JEPA, JEPA-WM and DINO-WM planning criteria on matched candidate actions, scoring futures in each model's own predictive space. Publication identities credit the source models and objectives; the values use this paper's task populations. The second block implements optimization or stochastic-planning controls on the shared JEPA decision problem.

\textit{Gradient-conflict and variational controls.}
The PEGrad-inspired control~\citep{peri2025pegrad} keeps prediction as its primary gradient and difference/ranking supervision as an auxiliary gradient. A negative inner product triggers projection of the conflicting auxiliary component; the remaining norm is capped relative to the primary gradient. The Var-JEPA-inspired control~\citep{gogl2026varjepa} introduces an action-conditioned trajectory code, a training-time posterior conditioned on target futures, and a zero-initialized residual decoder. Reconstruction, prior/posterior regularization and decision losses train the model; inference uses the prior. Conflict-safe variational adaptation combines that residual with conflict-only projection of the reconstruction/KL auxiliary gradient. It is a combined control rather than an additional pretrained backbone.

\textit{Expected-value and tail-risk planning.}
The stochastic controls sample 16 prior futures per candidate and compute their terminal latent goal costs. Expected-value planning minimizes the sample mean. Empirical upper-tail CVaR minimizes the mean of the largest $\lceil(1-0.9)16\rceil=2$ sampled costs. These controls instantiate sampling-based planning~\citep{enwerem2026vnb}; the CVaR objective is related to the risk-sensitive criterion discussed by~\citet{ni2024cvar}. Posterior information is confined to training. Persistent candidate IDs resolve deployment-score ties.

\textit{Fusion and task-specific controls.}
Four-geometry fusion combines calibrated ordinal evidence from LeWM, TD-JEPA, JEPA-WM and DINO-WM without a learned set-wise correction. Shape and appearance fusion instead combines visual and proprioceptive ranks from the same task predictor. RoboTwin's fixed correction, scalar confidence gate and preservation ablation isolate the mechanisms in Table~\ref{tab:robotwincontrols}. The native VLA, V-JEPA~2-AC and Drive-JEPA controls retain their action-producing foundations and shared candidate/search budgets. Candidate Oracles use executed outcomes to quantify available headroom; learned selectors score candidates before execution.

\subsection{Notation and inference correspondence}
\label{app:notation}
In Section~\ref{sec:method}, $K>1$ is the candidate count, $H$ the prediction horizon, $D$ the latent dimension and $n$ the evaluation-start count. Superscripts $L,T,J,D$ identify LeWM, TD-JEPA, JEPA-WM and DINO-WM, while the unadorned $T$ in the loss is a temperature. Source rank $r_i^m\in[0,1]$ is normalized, with smaller values preferred. The final strict rank $\pi_i\in\{1,\ldots,K\}$ includes the calibrated gate before ordinal realization. Rank recovery measures ordering agreement, selected-action agreement measures decision agreement, and success measures executed goal attainment. Appendix~\ref{app:representation} derives the native-distance identity; Appendix~\ref{app:methoddetails} gives the full objectives and task interfaces.

\section{Decision-Alignment Objectives and Task Interfaces}
\label{app:methoddetails}
\subsection{Complete-set evidence and learning objectives}
The dual-model token contains two separately normalized 192-dimensional descriptors and two within-set ranks, giving 386 inputs. Each goal-relative difference is computed within its own backbone. Persistent action IDs break rank ties. The base fusion coefficient $\alpha$ is fitted on training/calibration data. A shared encoder maps the token to 64 dimensions, followed by two four-head, dropout-free Transformer layers without candidate-order positional encodings. Permuting input candidates therefore permutes their outputs. The correction head has dimensions $64\!\to\!8\!\to\!1$, and its final projection is initialized to zero.

For training sets containing successful and unsuccessful candidates, the full-list objective rewards any successful alternative rather than a single reference trajectory:
\begin{equation}
 \mathcal L_{\rm success}=-\log\sum_{i:y_i=1}p_i.
 \label{eq:success}
\end{equation}
The dynamically refreshed subset $\mathcal T_k$ contains the $k$ lowest current scores. Its success/failure pairs receive
\begin{equation}
 \mathcal L_{\rm local}=
 \underset{\substack{i\in\mathcal T_k:y_i=1\\j\in\mathcal T_k:y_j=0}}{\operatorname{mean}}
 \softplus\!\left(\frac{\mu+s_i-s_j}{T}\right).
 \label{eq:local}
\end{equation}
If the subset contains one class, pairs come from the complete set. Equation~\ref{eq:relational} combines these terms with squared-correction regularization. The subset concentrates supervision; inference still processes all $K$ candidates. With $i_b=\argmin_i b_i$ and $i_s=\argmin_i s_i$, the relational winner is admitted when $b_{i_b}-s_{i_s}>\tau_R$; otherwise the base action is retained. The checkpoint, fusion and gate are fixed on calibration data.

Predictive plasticity trains only the final predictor block and prediction projection. Its objective is
\begin{equation}
 \mathcal L_P=\mathcal L_{\rm success}
 +\lambda_B\mathcal L_{\rm boundary}
 +\lambda_K\mathcal L_{\rm rank}
 +\lambda_Z\mathcal L_{\rm latent}.
 \label{eq:plastic}
\end{equation}
Boundary pairs follow the fixed reference ordering. Rank preservation outside the boundary and latent consistency discourage unnecessary changes elsewhere in the predicted trajectory. The standalone score is the adapted future's native goal distance. In composition, the relational decision is the default. Its winner $i_R$ is replaced by predictor winner $i_P$ when $s^R_{i_R}-c^P_{i_P}>\tau_P$, using the recorded calibrated score construction and threshold.

\subsection{Predictive geometries and action-producing models}
The four-geometry PushT token is $[d_i^L;d_i^T;r_i^L;r_i^T;r_i^J;r_i^D]\in\mathbb R^{388}$, where $J,D$ denote JEPA-WM and DINO-WM. The expanded operator is trained for this input and refines its recorded JEPA-WM base score. Dense directional information stays within each predictive space; ordinal coordinates supply a common comparison interface. Shape and appearance experiments instead use visual, proprioceptive and native-joint ranks, with a $3\!\to\!64$ encoder and the same two-layer set operator and rank-eight head. Separate task-local checkpoints refine calibrated visual/proprioceptive fusion.

In RoboTwin, the pretrained VLA maps language, synchronized multi-view RGB and proprioception to a native bimanual action chunk and future token. One native future and 16 bounded spatial/temporal variants form the set. The operator compares predictive, visual, proprioceptive and temporal evidence. A learned preservation gate retains an already coherent native future; the selected relation is written back to the JEPA-compatible future/action interface consumed by the VLA decoder. Appendix~\ref{app:robotinterfaces} details observation conventions, scene-conditioned grasp proposals and their execution interface.

Drive-JEPA supplies 32 trajectories per scene. Each token combines a 256-dimensional query, native score and rank, and 24-dimensional trajectory geometry. A projection and two-layer, four-head relational encoder produce bounded corrections relative to the native winner, whose correction is fixed at zero. Supervised collision and time-to-collision readouts can penalize a switch; the native trajectory is retained unless the corrected margin supports replacement. On PiPER, D-JEPA similarly re-ranks shared candidates from a V-JEPA~2 action-conditioned planner. The predictor, search budget and low-level controller are unchanged. Appendices~\ref{app:drivinginterface} and~\ref{app:physicalinterface} specify the driving readouts and physical-robot data/control boundary.

\subsection{Future-latent realization}
Reacher's coefficient network receives source ranks, inferred relational rank, gate state and normalized time:
$\beta_{i,t}=\rho\tanh f_\phi(r_i^L,r_i^T,r_i^R,q,t/H)$.
The $5\!\to\!32\!\to\!1$ network has a zero-initialized output, five future steps and $\rho=0.1$. Equation~\ref{eq:transport} updates only the corresponding action and time index. Reacher evaluation measures executed success through relational selection and learned future displacement through the calibration diagnostic (Appendix~\ref{app:implementation}).

For PushT, persistent IDs make final gated ranks strict before exact realization. A deterministic unit-RMS direction handles zero terminal displacement. Terminal realization encodes the goal- and candidate-set-conditioned order in the radius while retaining the original direction and preceding future steps. Rank identity holds at deployed precision. The self-contained checkpoint embeds both predictive models and relational computation, exposing the aligned decision directly through native latent distance.

\section{Complete Numerical Details}
\label{app:statistics}
\begin{table*}[t]
\centering
\caption{\textbf{Language-conditioned bimanual manipulation in RoboTwin.} Success rates under each task's full completion criterion, with 128 independent test starts per task. All methods share the pretrained VLA, observations, candidate identities and execution horizon.}
\label{tab:robotwincontrols}
\fontsize{7.0}{9.0}\selectfont
\setlength{\tabcolsep}{2.2pt}
\renewcommand{\arraystretch}{1.10}
\begin{tabularx}{\textwidth}{>{\hsize=1.6\hsize\linewidth=\hsize}L*{5}{>{\hsize=.88\hsize\linewidth=\hsize}C}}
\toprule
\rowcolor{headerbg}
\textbf{Method} & \makecell{\textbf{Grab}\\\textbf{roller}} & \makecell{\textbf{Bread}\\\textbf{$\rightarrow$ skillet}} & \makecell{\textbf{Object}\\\textbf{$\rightarrow$ cabinet}} & \makecell{\textbf{Block}\\\textbf{handover}} & \makecell{\textbf{Overall}\\$n=512$}\\
\midrule
\rowcolor{referencebg}Native VLA & 71.09 & 64.06 & 59.38 & 52.34 & 61.72\\
Fixed correction & 75.00 & 68.75 & 64.84 & 58.59 & 66.80\\
Scalar gate & 75.78 & 67.97 & 63.28 & 57.81 & 66.21\\

\rowcolor{oursbg}\textbf{D-JEPA}\enspace\paperbadge{Full model} & \bestcell{83.59} & \bestcell{78.91} & \bestcell{75.00} & \bestcell{69.53} & \bestcell{76.76}\\
\midrule
Oracle & 89.06 & 85.16 & 82.03 & 77.34 & 83.40\\
\bottomrule
\end{tabularx}
\tablenote{Without preservation, overall success is 71.48\%. Entries report success percentages over 128 held-out starts per task. D-JEPA improves overall success by 15.04 points; Oracle is the paired candidate-outcome ceiling.}
\end{table*}

\begin{table*}[t]
\centering
\caption{\textbf{Paired success differences on the core formal populations.} A gain is D-JEPA success with baseline failure; a loss is the reverse. Intervals resample paired starts 10,000 times (seed 3072). Differences and intervals are percentage points.}
\label{tab:paired}
\tablefont
\begin{tabularx}{\textwidth}{L L r r r r}
\toprule
\rowcolor{headerbg}
\textbf{Task / configuration} & \textbf{Reference} & \textbf{$\Delta$ (pp)} & \textbf{95\% interval} & \textbf{Gains} & \textbf{Losses}\\
\midrule
PushT composition, $n=256$ & LeWM & \textbf{+4.30} & $[+0.39,+8.20]$ & 19 & 8\\
PushT composition, $n=256$ & TD-JEPA & \textbf{+10.94} & $[+5.86,+16.02]$ & 37 & 9\\
\midrule Reacher relational, $n=128$ & LeWM & \textbf{+7.03} & $[+0.78,+14.06]$ & 15 & 6\\
Reacher relational, $n=128$ & TD-JEPA & \textbf{+25.00} & $[+17.19,+32.81]$ & 33 & 1\\
\bottomrule
\end{tabularx}
\end{table*}

\begin{table}[t]
\begin{minipage}[t]{0.51\linewidth}\vspace{0pt}
\captionof{table}{\textbf{Independent PushT modules.} Shared confirmation population.}
\label{tab:independentmodules}
{\tablefont\setlength{\tabcolsep}{3pt}
\begin{tabularx}{\linewidth}{Lrr}
\toprule
\rowcolor{headerbg}\textbf{Configuration} & \textbf{Successes} & \textbf{\% $\uparrow$}\\
\midrule
TD-JEPA & 197/256 & 76.95\\
LeWM & 214/256 & 83.59\\
Predictive plasticity & 204/256 & 79.69\\
Relational alignment & 223/256 & 87.11\\
\rowcolor{oursbg}Calibrated composition & \textbf{225/256} & \textbf{87.89}\\
\bottomrule
\end{tabularx}}
\tablenote{Native adapted-future selection and calibrated composition are distinct readouts. All rows use the same starts and candidate pools as Table~\ref{tab:main}.}
\end{minipage}\hfill
\begin{minipage}[t]{0.46\linewidth}\vspace{0pt}
\captionof{table}{\textbf{Reacher success and cost.} Shared 128-start population.}
\label{tab:reachercost}
{\fontsize{6.8}{9.0}\selectfont\setlength{\tabcolsep}{2pt}
\begin{tabularx}{\linewidth}{Lrr}
\toprule
\rowcolor{headerbg}\textbf{Method} & \textbf{Successes (\%)} & \textbf{Cost $\downarrow$}\\
\midrule
TD-JEPA & 88/128 (68.75) & 0.0515\\
LeWM & 111/128 (86.72) & 0.0302\\
\rowcolor{oursbg}D-JEPA & \textbf{120/128 (93.75)} & \textbf{0.0245}\\
\bottomrule
\end{tabularx}}
\tablenote{Parentheses give success rates. The aligned choice improves goal attainment and terminal task cost under the same candidate actions. Relational selection supplies the executed actions; Appendix~\ref{app:representation} reports temporal-transport measurements.}
\end{minipage}
\end{table}

\subsection{Decision-local ranking diagnostics}
\label{app:distanceconflict}
Figure~\ref{fig:mismatch} combines executed-outcome diagnostics with predicted--realized latent-cost correlations. For each audit start and model, let $d_{\mathrm{s}}$ and $d_{\mathrm{f}}$ be the minimum predicted goal RMS distances among successful and failing deployable candidates. The normalized gap is $(d_{\mathrm{s}}-d_{\mathrm{f}})/(d_{\mathrm{s}}+d_{\mathrm{f}})$; a positive value means that the lowest-distance failing candidate outranks every successful alternative. All 96 audit starts contain both outcomes, and each model has eight positive-gap starts. Panel (b) includes every start, with median and interquartile summaries. The separate recorded illustration uses the same LeWM model, start and candidate pool for both actions: candidate A has distance 0.2027 and fails, whereas candidate B has distance 0.2167 and succeeds.

For panel (c), we retain each model's $K$ lowest predicted-cost candidates and form every success--failure pair within the shortlist. An inversion occurs when the failing candidate has strictly lower predicted cost; tied costs are not inversions. We average each start's inversion fraction equally across starts containing both outcomes. At $K=63,16,4$, eligible-start counts are $96,96,34$ for LeWM and $96,95,29$ for TD-JEPA. Their corresponding mean inversion percentages are $3.95,20.44,49.02$ and $3.87,19.44,38.22$. Each shortlist defines its own mixed-outcome population; these percentages measure pair ordering, not the planner's failure rate. All diagnostics exclude the expert candidate and use the existing recorded executions.

Table~\ref{tab:diagnostic} reports both correlation definitions from Figure~\ref{fig:mismatch}, including intermediate shortlist sizes. The two blocks follow Appendix~\ref{app:metrics}. Shortlists are nested within a model, so the columns are not independent samples.

\begin{table}[t]
\begin{minipage}[t]{.49\linewidth}\vspace{0pt}
\captionof{table}{\textbf{Decision-local ranking.} Within-start and pooled Spearman correlation.}
\label{tab:diagnostic}
{\fontsize{6.8}{9.0}\selectfont\setlength{\tabcolsep}{2.2pt}
\begin{tabularx}{\linewidth}{p{49pt}*{5}{>{\raggedleft\arraybackslash}X}}
\toprule
\rowcolor{headerbg}\textbf{Model} & \textbf{63} & \textbf{32} & \textbf{16} & \textbf{8} & \textbf{4}\\
\midrule
\rowcolor{sectiongray}\multicolumn{6}{l}{\textit{Within-start mean}}\\
LeWM & .895 & .842 & .621 & .339 & .108\\
TD-JEPA & .798 & .773 & .560 & .353 & .126\\
\rowcolor{sectiongray}\multicolumn{6}{l}{\textit{Pooled}}\\
LeWM & .931 & .894 & .774 & .610 & .519\\
TD-JEPA & .850 & .820 & .683 & .503 & .357\\
\bottomrule
\end{tabularx}}
\tablenote{Columns are predicted-cost shortlist sizes over the same 96 starts. Smaller shortlists focus on actions closer to execution.}
\end{minipage}\hfill
\begin{minipage}[t]{.48\linewidth}\vspace{0pt}
\captionof{table}{\textbf{Candidate-budget success.} Shared nested candidate subsets.}
\label{tab:budget}
{\fontsize{6.8}{9.0}\selectfont\setlength{\tabcolsep}{2.2pt}\renewcommand{\arraystretch}{1.50}
\begin{tabularx}{\linewidth}{p{49pt}*{4}{>{\raggedleft\arraybackslash}X}}
\toprule
\rowcolor{headerbg}\textbf{Model} & \textbf{8} & \textbf{16} & \textbf{32} & \textbf{63}\\
\midrule
LeWM & 80.42 & 81.52 & 82.18 & 83.59\\
TD-JEPA & 77.10 & 77.47 & 77.95 & 76.95\\
\rowcolor{oursbg}\textbf{D-JEPA} & \bestcell{81.96} & \bestcell{83.94} & \bestcell{86.28} & \bestcell{87.11}\\
\bottomrule
\end{tabularx}}
\tablenote{Success (\%) over 256 independent PushT starts, averaged across 16 shared subset seeds. D-JEPA uses relational selection. Every method receives the same nested subsets; weights and thresholds remain fixed. Full-pool values recover the corresponding results in Table~\ref{tab:independentmodules}.}
\end{minipage}
\end{table}

\subsection{Granular and generalization details}
Granular evaluates the full particle distribution with symmetric summed Chamfer distance and reports both thresholded attainment and continuous costs. The main relational instance improves attainment at the registered standard and strict thresholds of 0.18 and 0.12 relative to native selection. Table~\ref{tab:main} reports mean costs alongside attainment; Figure~\ref{fig:granularmetrics} shows the complete paired cost distribution.

\begin{table}[t]
\centering
\caption{\textbf{Paired outcomes under geometric and visual change.} D-JEPA versus calibrated fusion. Counts retain all starts, including shared successes and failures.}
\label{tab:shiftcounts}
\tablefont
\begin{tabularx}{\columnwidth}{>{\hsize=1.65\hsize\linewidth=\hsize\raggedright\arraybackslash}X*{5}{>{\hsize=.87\hsize\linewidth=\hsize\centering\arraybackslash}X}}
\toprule
\rowcolor{headerbg}
\textbf{Condition} & \textbf{Fusion} & \textbf{D-JEPA} & \textbf{Gains} & \textbf{Losses} & \textbf{Net}\\
\midrule
\rowcolor{sectiongray}\multicolumn{6}{l}{\textit{Held-out shapes: 100 independent starts per geometry}}\\
I shape & 37/100 & 52/100 & 21 & 6 & \textbf{+15}\\
Small T & 61/100 & 66/100 & 13 & 8 & \textbf{+5}\\
Square & 36/100 & 43/100 & 14 & 7 & \textbf{+7}\\
\midrule
\rowcolor{sectiongray}\multicolumn{6}{l}{\textit{Appearance: the same 50 new base starts across seven conditions}}\\
Clean & 35/50 & 39/50 & 4 & 0 & \textbf{+4}\\
Blur & 38/50 & 41/50 & 3 & 0 & \textbf{+3}\\
Salt-and-pepper & 36/50 & 41/50 & 6 & 1 & \textbf{+5}\\
Darkening & 37/50 & 41/50 & 4 & 0 & \textbf{+4}\\
Object colour & 15/50 & 27/50 & 15 & 3 & \textbf{+12}\\
Goal-marker colour & 28/50 & 30/50 & 5 & 3 & \textbf{+2}\\
Pusher colour & 31/50 & 37/50 & 6 & 0 & \textbf{+6}\\
\bottomrule
\end{tabularx}
\end{table}

D-JEPA improves over both native selection and calibrated fusion on every held-out geometry, with the largest gain over fusion on the I shape. Each shape contributes a distinct, equally sized population, so the aggregate in Table~\ref{tab:generalization}A weights the three geometries equally. Table~\ref{tab:shiftcounts} further separates recovered successes from regressions instead of repeating the aggregate rates.

\section{Candidate Availability and Representation Analyses}
\label{app:budget}
\subsection{Candidate-budget protocol}
All methods use identical, uniformly sampled nested subsets for budgets 8, 16, 32 and 63, with 16 subset seeds (9201--9216). Evaluation uses the calibrated weights and thresholds, with within-subset ranks and relational outputs recomputed for each uniformly sampled candidate subset. Success labels come from the original executed candidate rollouts. At 63 candidates, each method reproduces all of its released selected IDs. The curves report means and min--max ranges over subset seeds. D-JEPA success increases from 81.96\% with eight candidates to 87.11\% with 63, demonstrating improved use of richer candidate availability.

\subsection{Representation realization}
\label{app:representation}
Figure~\ref{fig:liftingdetail} complements the computational structure in Figure~\ref{fig:lifting} with a geometric view of same-action transport and native-distance realization.

\begin{figure}[t]
\centering
\includegraphics[width=\linewidth]{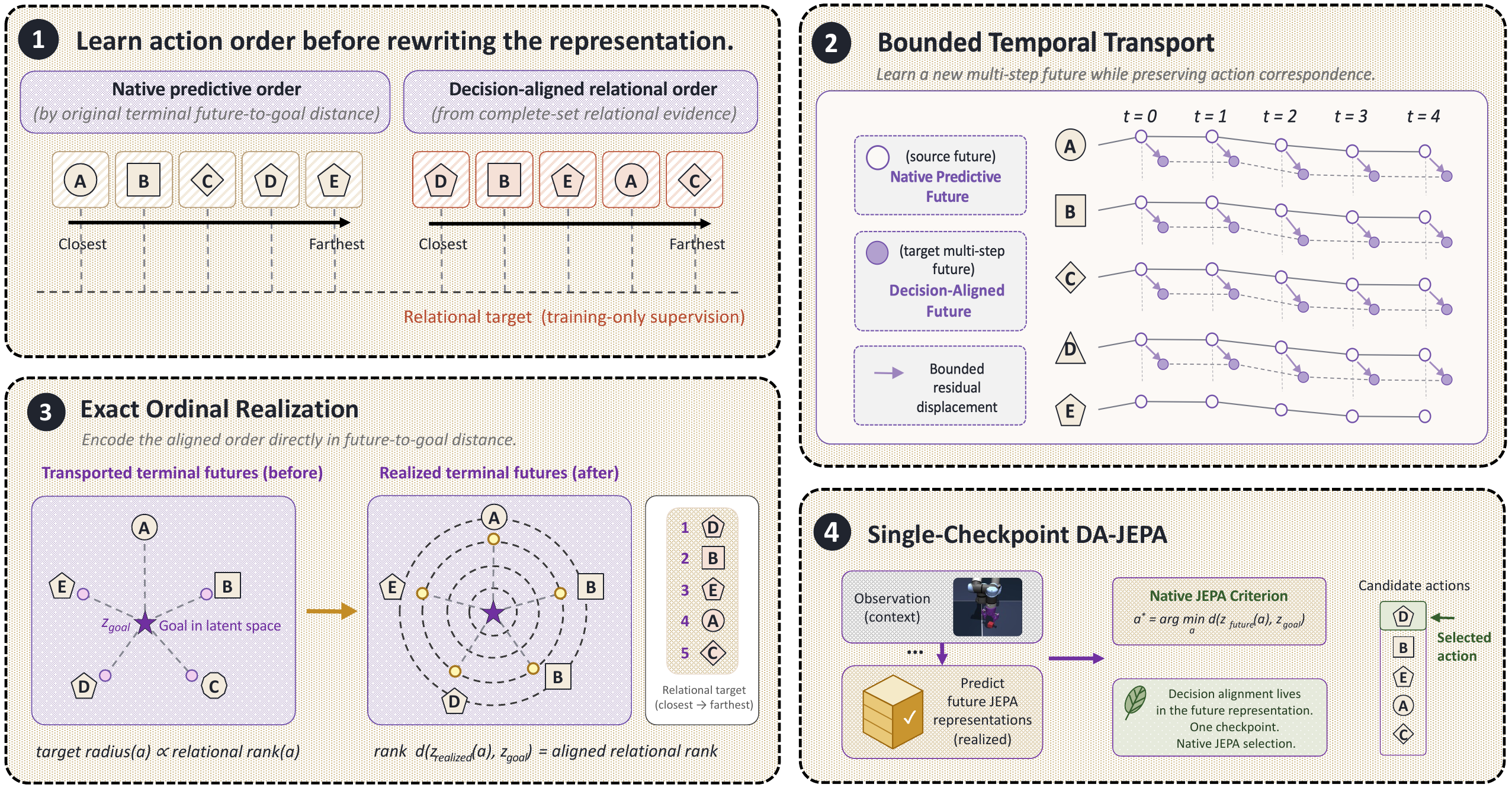}
\caption{\textbf{From aligned action preferences to realized future geometry.} Bounded temporal transport refines future trajectories with matched candidate and time correspondence. Exact ordinal realization places terminal futures at goal-relative radii specified by the final strict ranks. Native goal distance then recovers the aligned choice. The panels illustrate the complementary mechanisms formalized in Equations~\ref{eq:transport} and~\ref{eq:ordinal}; Table~\ref{tab:deployment} verifies the ordinal interface numerically.}
\label{fig:liftingdetail}
\end{figure}

\textit{Ordinal identity.}
Proposition~\ref{prop:exactrealization} proves that Equation~\ref{eq:ordinal} realizes the final strict rank $\pi_i$ through native mean-squared distance $(\pi_i/(K+1))^2$. Persistent IDs resolve score ties before realization; a deterministic unit-RMS direction handles zero terminal displacement. Table~\ref{tab:deployment} verifies complete rank and selected-action recovery at deployed precision. Terminal realization preserves the first four predicted steps of the five-step PushT trajectory.

\textit{Temporal displacement.}
Equation~\ref{eq:transport} bounds the per-time-step L2 update by $\rho$. The observed maximum is 0.0017227774 on the 128-start calibration representation diagnostic. Time-conditioned coefficients can have either sign, allowing different parts of the same candidate trajectory to change in different directions along the prescribed cross-model displacement.

\begin{figure}[t]
\centering
\begin{subfigure}[t]{0.24\linewidth}\centering
\includegraphics[width=\linewidth]{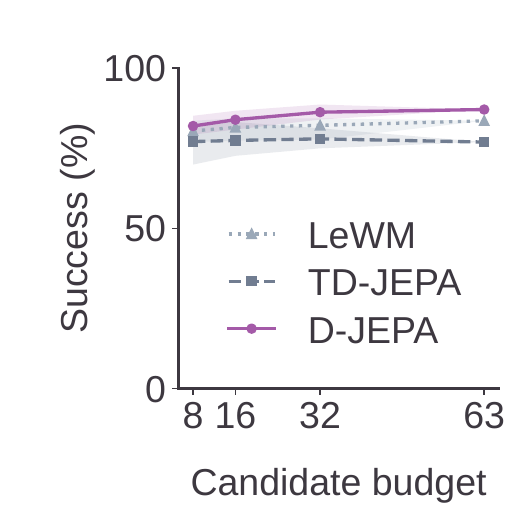}
\caption{Candidate budget.}
\end{subfigure}\hfill
\begin{subfigure}[t]{0.24\linewidth}\centering
\includegraphics[width=\linewidth]{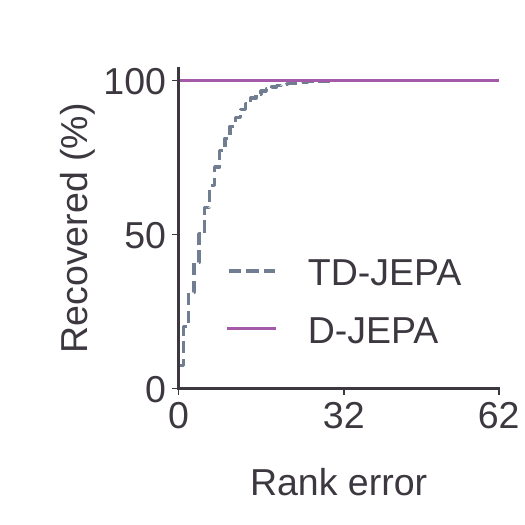}
\caption{Rank recovery.}
\end{subfigure}\hfill
\begin{subfigure}[t]{0.24\linewidth}\centering
\includegraphics[width=\linewidth]{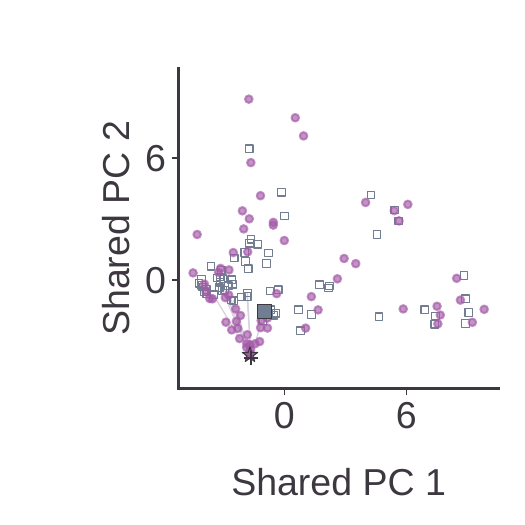}
\caption{Terminal geometry.}
\end{subfigure}\hfill
\begin{subfigure}[t]{0.24\linewidth}\centering
\includegraphics[width=\linewidth]{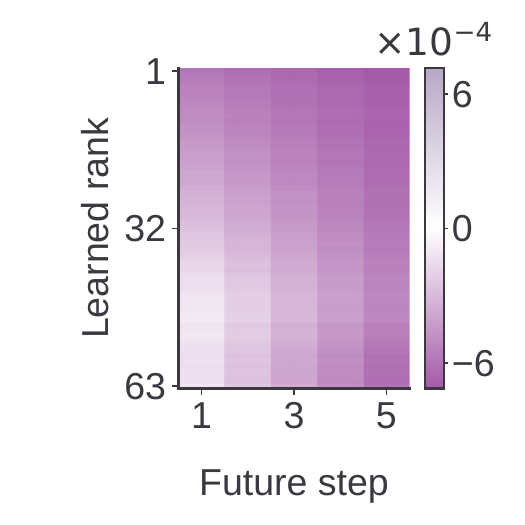}
\caption{Temporal transport.}
\end{subfigure}
\caption{\textbf{Decision learning and its representation mechanisms.} (a) Fixed-weight relational inference with shared candidate subsets; shaded bands span 16 subset seeds. (b) Native recovery of all 16,128 learned ranks over 256 starts. (c) PushT start 176, all 63 candidate endpoints in one joint PCA (47.2\% and 18.4\% explained variance): open grey squares are original predictive futures, purple dots are lifted futures, filled square/star mark selected actions, and the plus marks the goal. The projection visualizes terminal future representations in latent space. (d) Mean signed transport coefficient by learned rank and future step on 128 Reacher calibration starts; the symmetric colour scale is in units of $10^{-4}$.}
\label{fig:budgetorder}
\label{fig:representationdetail}
\end{figure}

\section{Additional Physical Visualizations}
Qualitative figures pair recorded executions under matched starts, goals and horizons. Selected baseline-failure/D-JEPA-success cases illustrate the action consequences, alongside full-population outcomes in the numerical tables.

\begin{figure}[t]
\centering
\begin{subfigure}[t]{0.24\linewidth}\centering
\includegraphics[width=\linewidth]{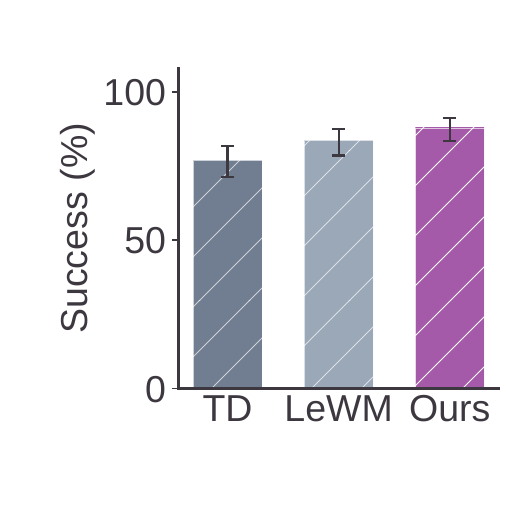}
\caption{PushT success.}
\end{subfigure}\hfill
\begin{subfigure}[t]{0.24\linewidth}\centering
\includegraphics[width=\linewidth]{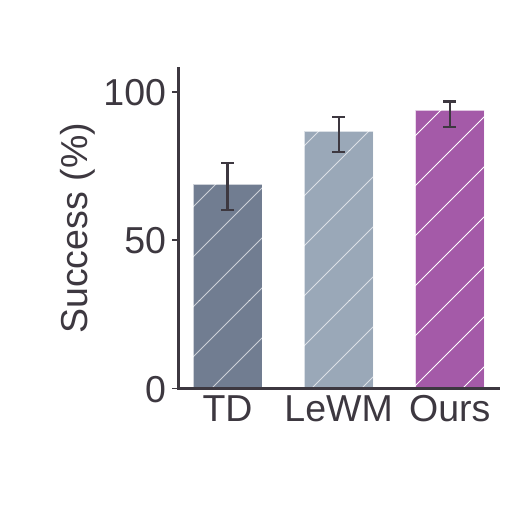}
\caption{Reacher success.}
\end{subfigure}\hfill
\begin{subfigure}[t]{0.24\linewidth}\centering
\includegraphics[width=\linewidth]{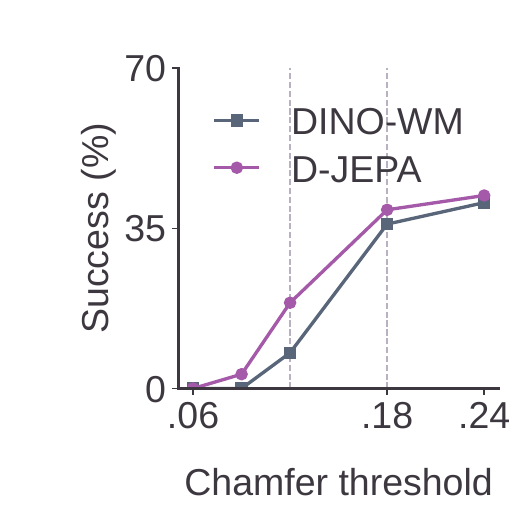}
\caption{Granular attainment.}
\end{subfigure}\hfill
\begin{subfigure}[t]{0.24\linewidth}\centering
\includegraphics[width=\linewidth]{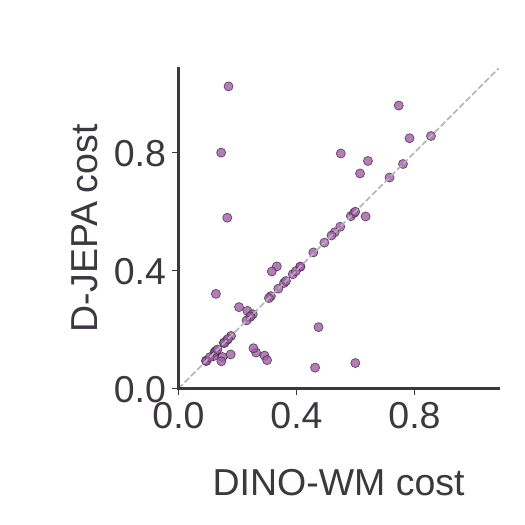}
\caption{Granular costs.}
\end{subfigure}
\caption{\textbf{Executed outcomes across the formal task populations.} (a,b) PushT ($n=256$) and Reacher ($n=128$) success, with Wilson 95\% intervals over starts; TD denotes TD-JEPA and Ours denotes D-JEPA. Configurations match Table~\ref{tab:main}. (c) Granular success across the registered Chamfer thresholds, with strict/main thresholds at 0.12/0.18. (d) All 64 paired Granular costs; points below the diagonal favour D-JEPA. Thresholded attainment and the full cost distribution describe complementary outcomes.}
\label{fig:granularmetrics}
\end{figure}

\begin{figure}[t]
\centering
\begin{subfigure}[t]{0.48\linewidth}
\centering
\includegraphics[width=\linewidth]{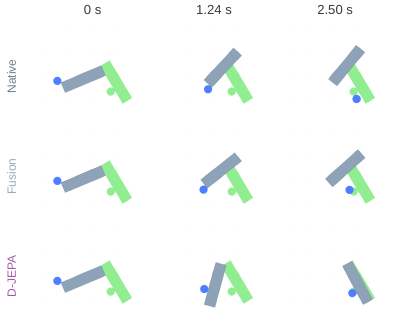}
\caption{Unseen I-shaped object.}
\label{fig:shapequal}
\end{subfigure}\hfill
\begin{subfigure}[t]{0.48\linewidth}
\centering
\includegraphics[width=\linewidth]{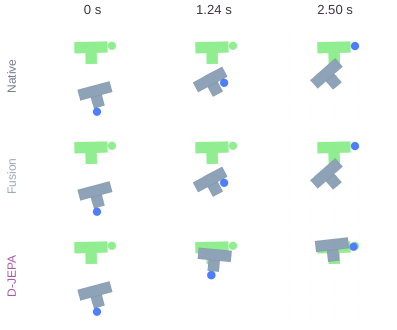}
\caption{Unseen small T-shaped object.}
\label{fig:smalltqual}
\end{subfigure}

\vspace{4pt}
\begin{subfigure}[t]{0.48\linewidth}
\centering
\includegraphics[width=\linewidth]{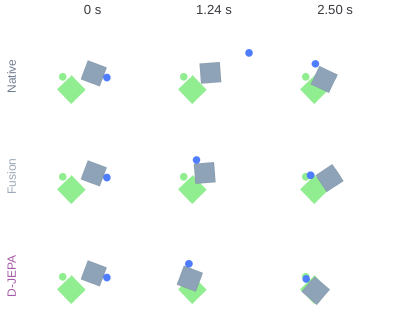}
\caption{Unseen square object.}
\label{fig:squarequal}
\end{subfigure}\hfill
\begin{subfigure}[t]{0.48\linewidth}
\centering
\includegraphics[width=\linewidth]{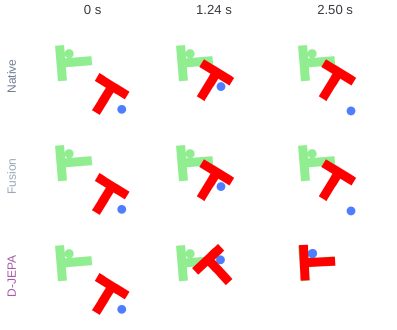}
\caption{Object-colour change.}
\label{fig:shiftqual}
\end{subfigure}
\caption{\textbf{Matched decisions under geometric and visual change.} Native selection, calibrated fusion and D-JEPA share the start, goal and candidate pool in every panel. The frames span the complete selected sequence and retain the measured success or failure outcome.}
\label{fig:generalizationcases}
\end{figure}
Figures~\ref{fig:additionaldynamics} and~\ref{fig:generalizationcases} provide task execution context for the corresponding quantitative experiments. Individual frames, longer sequences, additional appearance cases and synchronized videos are provided as supplementary visualizations.

\section{Task Coverage and Additional Diagnostics}
\label{app:case_sequences}
\textit{Objective preservation.}
A controlled adaptation study identified a mismatch between improvements in global prediction error and action selection. This finding motivated supervision concentrated near the decision boundary, with predictive structure retained elsewhere.

\textit{Appearance evaluation.}
The visual-shift experiment measures decision alignment across changes in blur, noise, illumination and colour. Task-local fitting uses the specified appearance families, followed by evaluation on fresh starts from those families (Table~\ref{tab:generalization}B).

Figure~\ref{fig:visualconditions} shows the seven appearance conditions using the same physical start and goal. The images are recorded context and goal observations supplied to the predictive models.
\begin{figure}[!htbp]
\centering
\includegraphics[width=\linewidth]{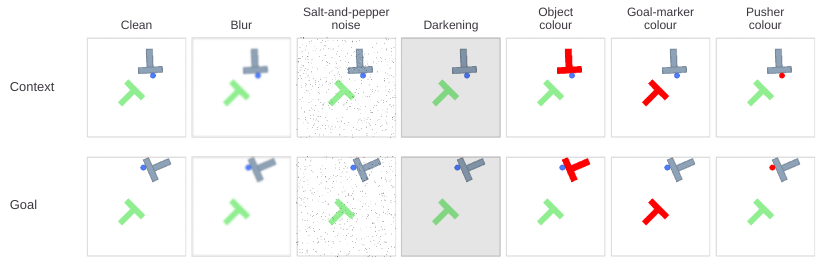}
\caption{\textbf{Seven appearance conditions at a shared physical state.} Columns vary blur, noise, illumination and object, goal-marker or pusher colour relative to the clean input. The upper and lower rows show context and goal observations, respectively. All cells retain their original observed colours and native image detail.}
\label{fig:visualconditions}
\end{figure}

\textit{Cube task coverage.}
The Cube task screen extends the manipulation coverage to a regime with near-saturated success. Figure~\ref{fig:cubecoverage} shows LeWM and TD-JEPA reaching the same goal through different selected action sequences from a shared start.
\begin{figure}[!htbp]
\centering
\includegraphics[width=\linewidth]{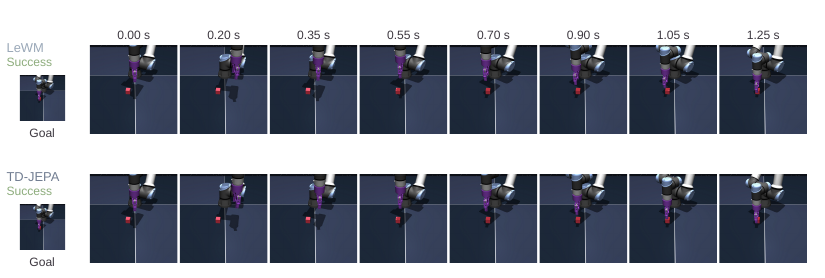}
\caption{\textbf{Cube task coverage with saturated success.} LeWM and TD-JEPA execute their selected 25-control sequences from the same start toward the same goal. Eight shared timestamps cover the full horizon; both baselines succeed. The paired execution illustrates different action choices with the same successful outcome.}
\label{fig:cubecoverage}
\end{figure}

\subsection{Robotic manipulation}
Figures~\ref{fig:robotsequencea} and~\ref{fig:robotsequenceb} examine two additional grasping configurations from complementary viewpoints. The oblique view exposes the approach and grasp, while the overhead view emphasizes the gripper--object geometry and enlarges the terminal state. Both compare saved native and aligned action sequences from an identical restored state. Intermediate frames use matched simulation steps; endpoints retain each sequence's own stopping step. The action construction and execution interface are described in Appendix~\ref{app:robotinterfaces}.

\begin{figure}[!htbp]
\centering
\includegraphics[width=\linewidth]{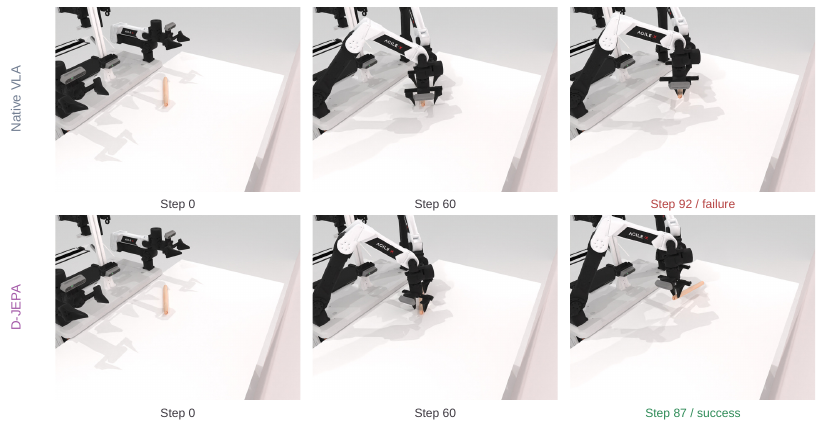}
\caption{\textbf{Approach and grasp from an oblique viewpoint.} The initial state, an intermediate contact configuration and the recorded endpoint reveal different physical outcomes. D-JEPA attains the grasping criterion while native execution remains unsuccessful. Both rows use the same fixed camera and unmodified saved actions; endpoint labels give the respective stopping steps.}
\label{fig:robotsequencea}
\end{figure}

\begin{figure}[!htbp]
\centering
\includegraphics[width=\linewidth]{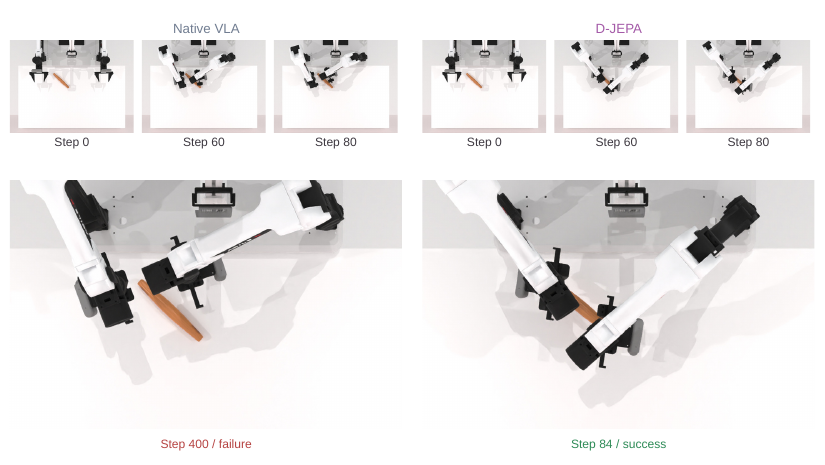}
\caption{\textbf{Contact geometry and terminal outcomes from overhead.} Small matched-step frames provide the approach context; the larger panels use an identical crop to expose the final gripper--object configuration. D-JEPA completes the grasp, while native execution remains unsuccessful through its full stored sequence. Endpoint labels retain the respective stopping steps.}
\label{fig:robotsequenceb}
\end{figure}

\FloatBarrier
\subsection{Autonomous driving}
Figures~\ref{fig:drivesequencecomfort} and~\ref{fig:drivesequenceprogress} complement Figure~\ref{fig:drivingcase} with a turning sequence and a keyframe-focused progress comparison. In each scene, the relational readout selects from the same Drive-JEPA proposals. Logged camera images supply shared scene context, while bird's-eye views show simulated ego motion against recorded traffic. Paired views use identical physical scales and coordinate bounds to reveal the consequences of each selected action.

\begin{figure}[!htbp]
\centering
\includegraphics[width=.92\linewidth]{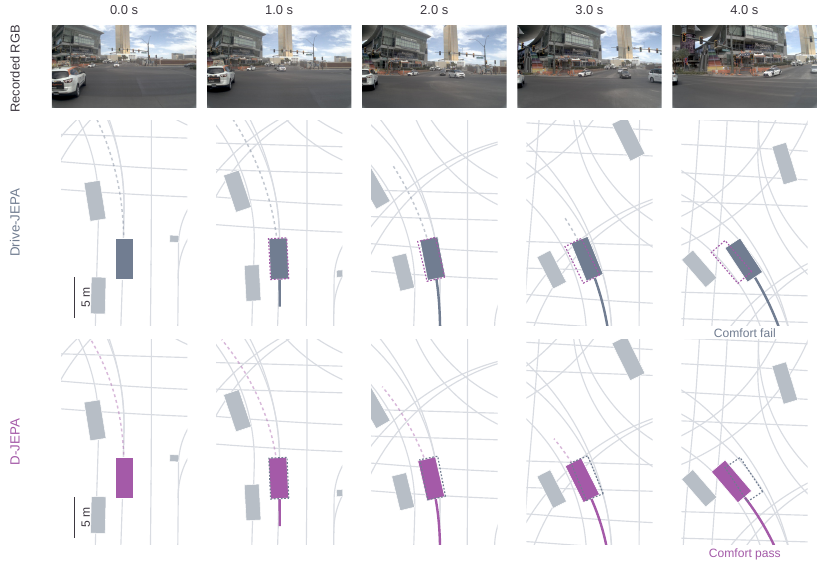}
\caption{\textbf{Aligned turning satisfies the comfort criterion.} Columns sample the four-second trajectory at one-second intervals. Paired windows follow the mean position; dashed footprints locate the alternative ego position. Only D-JEPA passes the trajectory-level comfort criterion, while both methods pass the recorded collision and TTC criteria.}
\label{fig:drivesequencecomfort}
\end{figure}

\begin{figure}[!htbp]
\centering
\includegraphics[width=.92\linewidth]{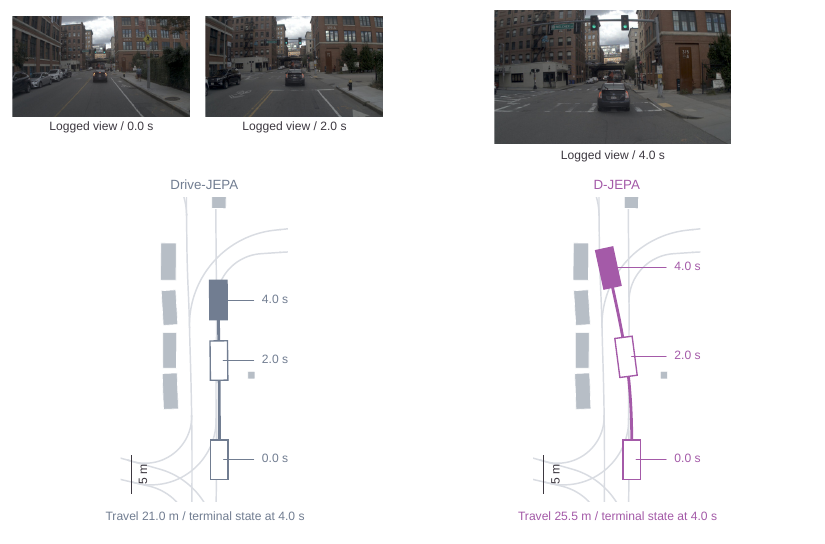}
\caption{\textbf{Keyframes expose different progress through the same traffic scene.} Logged images provide context for the enlarged four-second view. Equal-scale bird's-eye panels show recorded traffic, earlier ego footprints (open) and endpoints (filled). D-JEPA travels farther and attains a higher progress score; both trajectories pass the recorded collision, TTC and comfort criteria.}
\label{fig:drivesequenceprogress}
\end{figure}

\section{Application-Specific Inputs and Execution Interfaces}
\label{app:applications}
The task interfaces instantiate decision alignment at the point where a predictive or action-producing model exposes alternative futures. Physical interaction uses goal-relative predictive evidence; robotic manipulation additionally exposes gripper, object and temporal relations; driving exposes trajectory queries and predicted scores. The shared learning objectives are given in Appendix~\ref{app:methoddetails}. Here we describe how observations become candidates, how alignment affects execution, and which quantities are available before an action is taken. Additional matched execution sequences are shown in Appendix~\ref{app:case_sequences}.

\subsection{Robotic manipulation: observations, proposals and execution}
\label{app:robotinterfaces}
RoboTwin observations contain synchronized RGB views, robot proprioception and a language instruction. The data interface retains four $240\!\times\!320$ RGB views, 14-dimensional joint state/action vectors and 20-dimensional end-effector state/action vectors. Conversion to the execution interface is explicit: each arm contributes a three-dimensional position, a unit quaternion and a gripper value, giving a 16-dimensional bimanual command. Native policy outputs and corrected futures use the same action convention. The demonstration inventory is partitioned by episode identity into training, calibration and offline-validation subsets; these subsets are distinct from simulator evaluation starts.

\textit{Scene-conditioned grasp proposals.}
The bimanual grasping interface used for Figure~\ref{fig:robotwincase} predicts where a demonstration's grasp relation should occur in the current scene. From the initial head-camera image, it extracts the tool's chromatic support and summarizes its image-plane centre, principal direction, extent and two endpoints as nine geometry coordinates. A standardized ridge map predicts the six-dimensional pair of left/right grasp positions. Training demonstrations fit the map; calibration demonstrations select regularization from $\{10^{-4},10^{-3},10^{-2},10^{-1},1\}$. The predicted target span is normalized using training demonstrations, and a training trajectory with matching grasp geometry supplies the motion template.

Let $p^\ell_t$ be the template position for arm $\ell$, $g^\ell$ its grasp target and $\hat g^\ell$ the scene-conditioned target. The position update is
\begin{equation}
 \tilde p^\ell_t=p^\ell_t+w_t(\hat g^\ell-g^\ell),\qquad
 w_t=3u_t^2-2u_t^3,\quad u_t=\min(t/t_g,1),
 \label{eq:robotwarp}
\end{equation}
where $t_g$ is the template grasp step. Gripper commands and orientations retain their template correspondence, with quaternion normalization at the interface. Sixteen target variants combine offsets of $\{0,0.02,0.04,0.06\}$ metres along the predicted tabletop direction and $\{-0.03,-0.01,0.01,0.03\}$ metres perpendicular to it. Both arms receive the same spatial offset, preserving the predicted relative grasp geometry. The untouched native future remains in the candidate set. Sequences retain the complete declared horizon, padding a shorter sequence with its final command when needed.

\textit{Relational preservation and execution.}
The grasp-timing readout compares the native bimanual future with the predicted grasp target. It finds the step of minimum mean left/right target distance among steps with both grippers closed; if none is closed, it uses the full sequence. Dividing that step index by the sequence length gives a temporal relation $q$. The timing gate selects the corrected future when $q<\tau$, otherwise preserving the native future. The alternative distance gate uses the minimum mean target distance and the opposite inequality. Threshold calibration uses paired development outcomes with preservation-oriented tie breaking. At execution, predicted geometry and candidate actions determine the gate decision. Candidate IDs are retained through subset construction, selection and execution. Matched simulator branches restore one initial snapshot and record executed length, first success and any execution error; the chosen 16-dimensional sequence is consumed by the same low-level execution interface.

\subsection{Autonomous driving: trajectory evidence and calibrated readouts}
\label{app:drivinginterface}
Drive-JEPA exposes 32 candidate trajectories, each containing eight ego-frame $(x,y,\theta)$ poses spaced by 0.5 seconds over a four-second horizon. The exporter captures a 256-dimensional proposal-query summary at the native scorer, the predicted total score, and the six native auxiliary output channels. These queries describe candidate evidence; recorded future camera frames and official outcome scores are stored separately. A tie-aware within-scene coordinate for native score $s_i$ is
\begin{equation}
 r_i=K^{-1}\!\left(\sum_j\mathbf1[s_i>s_j]
                   +\tfrac12\sum_j\mathbf1[s_i=s_j]\right).
\end{equation}
Unlike the lower-is-better goal costs used in manipulation, higher driving scores are preferred. Concatenating query, six auxiliary channels, score, rank and 24 trajectory coordinates gives the 288-dimensional score-correction input. The anchor-relative readout omits the six auxiliary channels and uses 282 dimensions. Normalization statistics come from fitting examples only.

\textit{Score correction.}
A linear projection, LayerNorm and GELU map each token to 64 dimensions. Two four-head Transformer layers with 128-dimensional feed-forward blocks, zero dropout and no candidate-position encoding compare the set. A zero-initialized head gives $\tilde s_i=s_i+0.2\tanh(w^\top h_i+b)$. Calibration scales this residual before selection. The loss combines squared error to official candidate quality, pairwise ordering near the native high-score boundary and a squared-correction penalty. The independent MLP control replaces set attention with two per-candidate layers; score fusion operates on native scalar outputs. The recipe uses AdamW, learning rate $10^{-3}$, weight decay $10^{-4}$, batch size 16, gradient clipping at 1 and 30 epochs; residual scale and epoch are chosen on calibration examples.

\textit{Anchor-relative correction and risk.}
With native winner $b=\argmax_i s_i$, the relative head gives $\delta_i=0.2\tanh(w^\top(h_i-h_b))$, so $\delta_b=0$ exactly. Two supervised logits predict responsibility-collision and TTC failure. Their sigmoid probabilities define $v_i=p_i^{\rm NC}+(5/12)p_i^{\rm TTC}$. The calibrated utility and switch rule are
\begin{equation}
 U_i=s_i+\alpha\delta_i-\lambda\max(v_i-v_b,0),\qquad
 a^*=\begin{cases}a_j,&U_j>U_b+m,\\a_b,&\text{otherwise},\end{cases}
 \quad j=\argmax_i U_i.
 \label{eq:driveswitch}
\end{equation}
Native auxiliary channels are not substituted for the supervised risk probabilities. The relative loss combines weighted smooth-$L_1$ regression of candidate benefit, pairwise benefit ordering and correction regularization; enabling risk adds weighted binary cross-entropy. Boundary and safety-negative weights concentrate learning on consequential changes. This configuration uses AdamW with learning rate $3\times10^{-4}$, weight decay $10^{-3}$, batch size 64 and gradient clipping at 1. Leave-one-source-log-out calibration selects epochs from $\{8,16,32\}$, $\alpha\in\{0,0.1,0.25,0.5,1\}$, $\lambda\in\{0,0.1,0.3\}$ and $m\in\{0,0.005,0.02\}$, retaining native fallback. Source-log segments stay together, and each fold computes its own fitting normalization. The final chosen candidate is passed unchanged to the official trajectory evaluator. PDMS and its components follow Appendix~\ref{app:metrics}.

\subsection{Physical robot: predicted futures to action sequences}
\label{app:physicalinterface}
The physical-robot interface keeps action-conditioned prediction, candidate selection and device control separate. Recorded windows supply observed frames, goal observations, candidate actions and persistent IDs. For each predictive geometry, the relational input concatenates normalized terminal-minus-goal descriptors with within-set ranks of native costs. A 64-dimensional projection, two four-head set-encoder layers and a rank-eight correction head produce bounded changes to the mean ordinal base score. At inference, the scoring function maps candidate futures and native costs to the selected action.

When candidate actions are generated by cross-entropy search, the native criterion compares candidates across search iterations; set-relative ranks are applied only after constructing the shared decision pool. This preserves score semantics as the search distribution changes. The selected candidate ID passes unchanged to the device-specific controller, which executes its original action sequence with the appropriate action conversion, timing and calibration. Predictive selection and low-level execution thus form explicit interface stages.

\section{Theoretical Properties of Decision Alignment}
\label{app:theory}
The following properties connect the decision-local prediction gap to bounded alignment, scale-free predictive evidence and native-distance realization. We use a fixed candidate set with $K\ge2$ and retain the score conventions of Section~\ref{sec:method}.

\subsection{Global prediction quality and execution decisions}
\label{app:globalgap}
Let $q_i$ denote realized execution cost and $c_i$ its prediction on the same scale, with smaller values preferred. The prediction selects $\hat\imath=\argmin_i c_i$; its execution regret is $q_{\hat\imath}-\min_iq_i$.

\begin{proposition}[Global agreement and decision error]
\label{prop:globalgap}
For any $K\ge2$ and $\Delta>0$, there exist distinct nonnegative realized costs and predictions in $[0,2\Delta]$ such that
\begin{equation}
 \frac{1}{K}\sum_{i=1}^K(c_i-q_i)^2=\frac{2\Delta^2}{K},
 \qquad \rho_{\mathrm{Sp}}(c,q)=1-\frac{12}{K(K^2-1)},
 \qquad q_{\hat\imath}-\min_iq_i=\Delta.
 \label{eq:globalgap}
\end{equation}
Thus vanishing candidate-average cost error and near-perfect global rank agreement can coexist with a fixed execution regret.
\end{proposition}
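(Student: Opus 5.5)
The plan is an explicit construction: take the realized costs and the predictions to be identical except on the two best candidates, where they swap. A single adjacent transposition at the top of the ranking gives exactly two unit rank displacements. In Spearman's formula $\rho_{\mathrm{Sp}}=1-6\sum_i d_i^2/(K(K^2-1))$, this yields $\sum_i d_i^2=2$, which is the stated value $1-12/(K(K^2-1))$.

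\textbf{Construction.} Set
\[
q_1=0,\quad c_1=\Delta,\qquad q_2=\Delta,\quad c_2=0,
\]
and for $i\ge3$ set
\[
q_i=c_i=\Delta\left(1+\frac{i-2}{K-2}\right)
\]
when $K\ge3$. These values are distinct and lie in $(\Delta,2\Delta]$. For $K=2$ only the swapped pair exists. All realized costs are then distinct and nonnegative, and all predictions lie in $[0,2\Delta]$, with no ties in either vector.

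\textbf{Verification of the three quantities.}
\begin{enumerate}
\item \emph{Average squared error.} Only candidates $1$ and $2$ contribute, each with error $\Delta$. Hence $\frac1K\sum_i(c_i-q_i)^2=2\Delta^2/K$.
\item \emph{Rank agreement.} Candidates $i\ge3$ occupy rank $i$ under both $c$ and $q$, because their common values exceed $\Delta$ and increase in $i$. Candidates $1$ and $2$ occupy ranks $\{1,2\}$ in swapped order, so $d_1^2=d_2^2=1$ and all other $d_i=0$. The tie-free Spearman formula then gives the stated coefficient.
\item \emph{Regret.} The prediction selects $\hat\imath=\argmin_i c_i=2$ uniquely, since $c_2=0$ and every other prediction is at least $\Delta>0$. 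Therefore $q_{\hat\imath}-\min_i q_i=\Delta-0=\Delta$.
\end{enumerate}

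\textbf{Conclusion.} With $\Delta$ fixed and $K\to\infty$, the candidate-average error $2\Delta^2/K\to0$ and $\rho_{\mathrm{Sp}}\to1$. The regret nevertheless remains exactly $\Delta$, which is the claimed reading.

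There is no real obstacle in this argument. The only step needing care is the bookkeeping of ranges and distinctness: the filler values must stay strictly above $\Delta$ and at most $2\Delta$, so that they neither disturb the top-two ranks nor leave the prescribed interval. The degenerate case $K=2$ must be handled separately, because the filler formula divides by $K-2$.
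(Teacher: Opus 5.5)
Your construction is correct and is essentially the paper's proof: exchange the realized and predicted costs of the two best candidates, keep all other candidates identical, and read off the average error $2\Delta^2/K$, the rank-difference sum $2$, and the regret $\Delta$. The only difference is cosmetic: the paper spaces the remaining values as $q_i=\Delta[1+(i-2)/(K-1)]$ for all $i\ge2$, which keeps them in $[\Delta,2\Delta)$ and removes the need for your separate $K=2$ case.
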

\begin{proof}
Set $q_1=0$ and $q_i=\Delta[1+(i-2)/(K-1)]$ for $i\ge2$. Set $c_1=q_2$, $c_2=q_1$, and $c_i=q_i$ otherwise. Only the first two prediction errors are nonzero, each with magnitude $\Delta$. Their ranks exchange positions, so the sum of squared rank differences is $2$; the Spearman formula yields Equation~\ref{eq:globalgap}. Prediction selects candidate $2$, whereas candidate $1$ minimizes realized cost, giving regret $\Delta$.
\end{proof}
For $K=63$, the correlation is approximately $0.999952$. Any success criterion $q_i\le\theta$ with $0<\theta<\Delta$ makes the predicted choice fail despite an available successful candidate. The construction isolates the distinction between average prediction quality and executed decisions; Figure~\ref{fig:mismatch} provides the corresponding empirical diagnosis.

\subsection{Decision margins and bounded alignment}
\label{app:marginproperties}
Uniform error control relates prediction to the realized decision margin. Here the error bound compares costs on a common scale.
\begin{lemma}[Execution margin under bounded cost error]
\label{lem:costmargin}
If $|c_i-q_i|\le\eta$ for all candidates, then $q_{\hat\imath}-q_{i^\star}\le2\eta$, where $i^\star\in\argmin_iq_i$. If $q_j-q_{i^\star}>2\eta$ for every $j\ne i^\star$, prediction selects $i^\star$ uniquely.
\end{lemma}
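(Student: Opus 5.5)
The plan is a direct sandwich argument that compares the predicted winner $\hat\imath$ with the realized minimizer $i^\star$ through their predicted costs. Both parts of Lemma~\ref{lem:costmargin} use only the uniform bound $|c_i-q_i|\le\eta$ and the defining property $c_{\hat\imath}\le c_j$ for every $j$, with $\hat\imath=\argmin_i c_i$. Ties in the $\argmin$ are handled by any fixed rule, for instance the persistent-ID convention; the argument never depends on which minimizer is chosen.

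For the regret bound I will chain three inequalities. First, $q_{\hat\imath}\le c_{\hat\imath}+\eta$ by the error bound applied to candidate $\hat\imath$. Second, $c_{\hat\imath}\le c_{i^\star}$ by optimality of $\hat\imath$ under prediction. Third, $c_{i^\star}\le q_{i^\star}+\eta$ by the error bound applied to $i^\star$. Combining these gives $q_{\hat\imath}\le q_{i^\star}+2\eta$, which is the first claim. The bound is also tight in spirit, since Proposition~\ref{prop:globalgap} exhibits swapped errors that realize exactly this kind of regret.

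For the uniqueness claim I will argue by contraposition on any competitor. Suppose some $j\ne i^\star$ satisfied $c_j\le c_{i^\star}$. Then the same chain gives
\[
q_j\le c_j+\eta\le c_{i^\star}+\eta\le q_{i^\star}+2\eta ,
\]
which contradicts the margin hypothesis $q_j-q_{i^\star}>2\eta$. Hence $c_{i^\star}<c_j$ for every $j\ne i^\star$, so $i^\star$ is the strict unique minimizer of $c$ and prediction selects it regardless of the tie rule.

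The only step that needs care is this strictness argument: I must state the contradiction with a non-strict assumption $c_j\le c_{i^\star}$, so that ties are excluded as well. It also helps to note that the margin hypothesis itself forces $i^\star$ to be the unique realized minimizer, so the choice of $i^\star$ within $\argmin_i q_i$ is unambiguous. Beyond this, the proof is routine.
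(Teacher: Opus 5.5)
Your proof is correct and is essentially the paper's argument: the same chain $q_{\hat\imath}\le c_{\hat\imath}+\eta\le c_{i^\star}+\eta\le q_{i^\star}+2\eta$. Uniqueness then follows because any $j\ne i^\star$ with $c_j\le c_{i^\star}$ would land inside the $2\eta$ band, which is the paper's ``every other candidate exceeds this bound'' with ties handled explicitly. One small aside that does not affect your proof: Proposition~\ref{prop:globalgap} does not attain the $2\eta$ bound, since its regret equals the error magnitude $\Delta$ rather than $2\Delta$.
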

\begin{proof}
Since $c_{\hat\imath}\le c_{i^\star}$, we have $q_{\hat\imath}\le c_{\hat\imath}+\eta\le c_{i^\star}+\eta\le q_{i^\star}+2\eta$. Under the stated strict margin, every other candidate exceeds this bound.
\end{proof}
For relational alignment, the relevant margin is measured in the base score $b_i$ of Equation~\ref{eq:bounded}. In the dual-model instance, $b_i=\alpha r_i^L+(1-\alpha)r_i^T$ is calibrated ordinal fusion.
\begin{proposition}[Margin preservation and selection locality]
\label{prop:boundedalignment}
Let $s_i=b_i+\delta_i$, with $|\delta_i|\le\epsilon$. If $b_j-b_i>2\epsilon$, then $s_i<s_j$ for every admissible correction. Moreover, for $i_b\in\argmin_i b_i$ and $i_s\in\argmin_i s_i$,
\begin{equation}
 b_{i_s}-b_{i_b}\le2\epsilon.
 \label{eq:selectionlocality}
\end{equation}
\end{proposition}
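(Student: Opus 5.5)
The plan is a direct two-part argument using only the uniform bound $|\delta_i|\le\epsilon$. I would treat Proposition~\ref{prop:boundedalignment} as the additive-perturbation counterpart of Lemma~\ref{lem:costmargin}. There the base score $b_i$ plays the role of the realized cost $q_i$, the aligned score $s_i$ plays the role of the prediction $c_i$, and $\epsilon$ replaces $\eta$. The two claims then follow from the same chain of inequalities, with only the direction of comparison changed.

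For margin preservation, fix $i,j$ with $b_j-b_i>2\epsilon$. I would write $s_j-s_i=(b_j-b_i)+(\delta_j-\delta_i)$. The correction difference satisfies $\delta_j-\delta_i\ge-|\delta_j|-|\delta_i|\ge-2\epsilon$. Hence $s_j-s_i\ge(b_j-b_i)-2\epsilon>0$. This holds for every admissible correction, in particular for the bounded head of Equation~\ref{eq:bounded}, since $|\epsilon\tanh(\cdot)|<\epsilon$.

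For selection locality, I would use the defining property of the two minimizers. Since $i_s$ minimizes $s$, we have $s_{i_s}\le s_{i_b}$. Expanding gives $b_{i_s}+\delta_{i_s}\le b_{i_b}+\delta_{i_b}$, so $b_{i_s}-b_{i_b}\le\delta_{i_b}-\delta_{i_s}\le2\epsilon$, which is Equation~\ref{eq:selectionlocality}. Alternatively, one can argue by contraposition from the first part: if $b_{i_s}-b_{i_b}>2\epsilon$, then $s_{i_b}<s_{i_s}$, contradicting minimality of $i_s$. I would state this equivalence to make clear that locality is a corollary of margin preservation.

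The only delicate point is tie handling. If $\argmin$ sets are not singletons, the statement is phrased for any elements $i_b,i_s$ of those sets, and both arguments above use only $s_{i_s}\le s_{i_b}$, which holds for any such choice. Persistent-ID tie breaking therefore does not affect the bound. I would also remark that the strict inequality in the first part becomes non-strict, $s_i\le s_j$, at margin exactly $2\epsilon$; this is why the hypothesis is strict.
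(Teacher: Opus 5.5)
Your proposal is correct and essentially matches the paper's proof: it lower-bounds $s_j-s_i$ by $b_j-b_i-2\epsilon>0$, and expands $s_{i_s}\le s_{i_b}$ to get $b_{i_s}-b_{i_b}\le\delta_{i_b}-\delta_{i_s}\le 2\epsilon$. Your analogy with Lemma~\ref{lem:costmargin} and the contrapositive derivation of locality from margin preservation are valid additions but not needed for the argument.
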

\begin{proof}
The corrected difference satisfies $s_j-s_i\ge b_j-b_i-2\epsilon>0$. Also, $s_{i_s}\le s_{i_b}$ implies $b_{i_s}-b_{i_b}\le\delta_{i_b}-\delta_{i_s}\le2\epsilon$.
\end{proof}
\begin{corollary}[Base-winner preservation]
\label{cor:winnerpreservation}
If $b_j-b_{i_b}>2\epsilon$ for every $j\ne i_b$, bounded relational alignment retains $i_b$.
\end{corollary}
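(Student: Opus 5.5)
The corollary follows directly from Proposition~\ref{prop:boundedalignment}. I would apply its first claim pairwise with $i=i_b$ and each $j\ne i_b$. By hypothesis $b_j-b_{i_b}>2\epsilon$, so every admissible correction with $|\delta_k|\le\epsilon$ gives $s_{i_b}<s_j$.

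Since this strict inequality holds for every competitor, $i_b$ is the unique minimizer of $s$. In particular, the relational winner $i_s\in\argmin_i s_i$ equals $i_b$. This holds regardless of how the relational operator, the tie-breaking convention or persistent IDs would otherwise act, because no tie can occur.

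The bound $|\delta_k|\le\epsilon$ needs no extra hypothesis for the deployed operator. Equation~\ref{eq:bounded} gives $\delta_i=\epsilon\tanh(\cdot)$, and $|\tanh|<1$.

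Alternatively, one could argue from the selection-locality bound (Equation~\ref{eq:selectionlocality}). It gives $b_{i_s}-b_{i_b}\le2\epsilon$, and the hypothesis excludes every $j\ne i_b$ from satisfying this, so $i_s=i_b$. I would mention this as a one-line check.

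The only point needing care is the calibrated gate of Appendix~\ref{app:methoddetails}. The gate chooses between the base winner $i_b$ and the relational winner $i_s$. Since these coincide, either branch outputs $i_b$, so the retained action is $i_b$ under gating as well. I expect no real obstacle; the main task is stating this gating remark so that \emph{retains} covers the deployed decision rule, not just $\argmin_i s_i$.
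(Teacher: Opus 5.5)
Your proposal is correct and matches the paper. The paper states this corollary without a separate proof, as an immediate consequence of the first claim of Proposition~\ref{prop:boundedalignment} (equivalently, of the selection-locality bound in Equation~\ref{eq:selectionlocality}), and it also notes that the relational/base gate inherits the conclusion because it returns either $i_s$ or $i_b$.

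One point of scope: your ``deployed decision rule'' remark covers only the relational/base gate with threshold $\tau_R$. It does not cover calibrated composition, which may substitute the predictor winner $i_P$; the paper explicitly places that rule outside these bounded-alignment guarantees.
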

Consequently, the bounded selector can promote only candidates within $2\epsilon$ of the base minimum. The relational/base gate also satisfies this selection bound because it returns either $i_s$ or $i_b$. These properties apply to the bounded relational score in Equation~\ref{eq:bounded}; predictor adaptation and calibrated composition use the separate rules in Appendix~\ref{app:methoddetails}.

\subsection{Scale-free evidence across predictive geometries}
\label{app:ordinalinvariance}
\begin{proposition}[Monotone-scale invariance of ordinal evidence]
\label{prop:ordinalinvariance}
For a fixed candidate set, let $c_i^m$ be source $m$'s native costs, with ties resolved by persistent candidate IDs. Any strictly increasing transformation $f_m$ preserves every ordinal coordinate:
\begin{equation}
 \frac{\rank_{\mathcal A}(f_m(c_i^m))-1}{K-1}
 =\frac{\rank_{\mathcal A}(c_i^m)-1}{K-1}=r_i^m.
 \label{eq:ordinalinvariance}
\end{equation}
\end{proposition}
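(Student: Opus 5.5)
The plan is to make the tie-resolved rank explicit and show it depends on the costs only through pairwise comparisons. First, I fix a strict total order on candidates. Let $\prec$ be the persistent-ID order. Define $i\lhd_m j$ if and only if $c_i^m<c_j^m$, or $c_i^m=c_j^m$ and $i\prec j$. Then $\rank_{\mathcal A}(c_i^m)=1+\#\{j\neq i: j\lhd_m i\}$. This is a bijection onto $\{1,\ldots,K\}$, because $\lhd_m$ is a strict total order: it is the lexicographic combination of the cost order with the ID tie-break.

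Next I compare the order induced by the transformed costs $f_m(c^m)$ with $\lhd_m$. Strict monotonicity of $f_m$ gives two equivalences, for all $i,j$:
\[
c_i^m<c_j^m \iff f_m(c_i^m)<f_m(c_j^m),
\qquad
c_i^m=c_j^m \iff f_m(c_i^m)=f_m(c_j^m).
\]
The forward directions follow directly from monotonicity. The reverse direction of the strict inequality uses trichotomy: if $c_i^m\ge c_j^m$, then $f_m(c_i^m)\ge f_m(c_j^m)$. The reverse direction of equality uses injectivity, which strict monotonicity supplies. Since the ID tie-break $\prec$ does not depend on the costs, the two induced total orders coincide. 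So each candidate has the same set of predecessors, and $\rank_{\mathcal A}(f_m(c_i^m))=\rank_{\mathcal A}(c_i^m)$ for every $i$.

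Subtracting one and dividing by $K-1$ (positive since $K\ge2$) gives Equation~\ref{eq:ordinalinvariance}. The right-hand equality there is just the definition of $r_i^m$ from Section~\ref{sec:relational}.

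The only delicate step is tie handling. Invariance would fail if ties in $c^m$ could split, or new ties appear, after applying $f_m$. Injectivity of a strictly increasing map rules out both, which is why the statement needs strict monotonicity rather than mere monotonicity. I would end with a remark on the consequence: every quantity in the tokens $v_i$ and the base score $b_i$ that comes from ranks is unchanged under any per-source monotone rescaling of native costs. This is the sense in which the ordinal interface is scale-free across heterogeneous geometries.
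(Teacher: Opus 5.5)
Your proposal is correct and follows the same route as the paper: strict monotonicity preserves every strict comparison and every equality among a source's costs, and because the persistent-ID tie-break does not depend on the costs, the tie-resolved ranks, and hence the normalized coordinates, are unchanged. The only difference is that you write out the lexicographic total order and the injectivity argument that the paper's two-line proof leaves implicit.
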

\begin{proof}
Strict increase preserves every strict comparison and equality among source costs. The same persistent IDs resolve the same ties, leaving each rank unchanged.
\end{proof}
Each predictive source may use its own transformation. This invariance concerns ordinal evidence, while dense descriptors retain their model-specific geometry. With descriptors and parameters fixed, the dual-model token and ordinal base score are unchanged, and so is their relational readout. Persistent IDs also make tie resolution independent of candidate presentation order, complementing the permutation-equivariant operator described in Appendix~\ref{app:methoddetails}.

\subsection{Exact realization through native latent distance}
\label{app:exactrealization}
\begin{proposition}[Exact native-distance realization]
\label{prop:exactrealization}
Let $\pi_i\in\{1,\ldots,K\}$ be the final strict ranks after gating. For a $D$-dimensional goal latent $z_g$, choose $\|u_i\|_{\rm RMS}=\sqrt{D^{-1}\|u_i\|_2^2}=1$. The realization $\tilde z_{i,H}=z_g+\pi_i u_i/(K+1)$ satisfies
\begin{equation}
 \|\tilde z_{i,H}-z_g\|_{\rm RMS}=\frac{\pi_i}{K+1},
 \qquad D^{-1}\|\tilde z_{i,H}-z_g\|_2^2=\left(\frac{\pi_i}{K+1}\right)^2.
 \label{eq:realizationidentity}
\end{equation}
Both native distances reproduce the full strict order and select $\argmin_i\pi_i$.
\end{proposition}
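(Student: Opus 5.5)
The argument is a direct computation, followed by a monotonicity observation. I would first substitute the realization into the displacement: $\tilde z_{i,H}-z_g=\pi_i u_i/(K+1)$. The RMS norm is absolutely homogeneous, since $\|\lambda v\|_{\rm RMS}=\sqrt{D^{-1}\lambda^2\|v\|_2^2}=|\lambda|\,\|v\|_{\rm RMS}$. Applying this with $\lambda=\pi_i/(K+1)>0$ and the normalization $\|u_i\|_{\rm RMS}=1$ gives the first identity in Equation~\ref{eq:realizationidentity}. Squaring it gives the second, because $D^{-1}\|v\|_2^2=\|v\|_{\rm RMS}^2$ by definition.

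Next I would show that both native distances reproduce the strict order. Take $\phi_1(x)=x/(K+1)$ and $\phi_2(x)=(x/(K+1))^2$. Both are strictly increasing on the positive reals, and every $\pi_i\ge1$ lies there. So for any $i\ne j$, $\pi_i<\pi_j$ holds if and only if the realized distance of $i$ is strictly smaller than that of $j$. Since the ranks $\pi_i$ are strict, meaning pairwise distinct and forming a permutation of $\{1,\ldots,K\}$, the realized distances are pairwise distinct. The induced order is therefore a strict total order identical to the order of $\pi$. In particular the unique minimizer of either native distance is the unique $i$ with $\pi_i=1$, which is $\argmin_i\pi_i$. This is the same monotone-invariance reasoning as in Proposition~\ref{prop:ordinalinvariance}, applied in the reverse direction.

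The only step that needs care is the hypothesis that a unit-RMS $u_i$ exists for every candidate. If the original terminal displacement $\hat z_{i,H}-z_g$ vanishes, it cannot be normalized. I would handle this by invoking the deterministic unit-RMS fallback direction from Section~\ref{sec:lifting} and Appendix~\ref{app:representation}; the proposition only requires $\|u_i\|_{\rm RMS}=1$, not that $u_i$ equal the normalized original direction. I would also note that strictness of $\pi$ comes from persistent-ID tie breaking, which supplies the distinctness used above. Finally, the claim concerns exact arithmetic. Agreement at deployed floating-point precision, where consecutive radii differ by $1/(K+1)$, is the empirical check reported in Table~\ref{tab:deployment} and is not part of this proof.
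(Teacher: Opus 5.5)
Your proof is correct and follows the paper's argument. Both substitute the realization, use homogeneity of the RMS norm together with the unit normalization, square to get the mean-squared identity, and conclude order and selection from strict monotonicity in the positive rank $\pi_i$; your added remarks on the zero-displacement fallback direction and persistent-ID strictness match conventions the paper states alongside the proposition.
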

\begin{proof}
Subtracting $z_g$ and taking the RMS norm gives $\pi_i/(K+1)$ by homogeneity and unit normalization. Squaring yields the mean-squared identity. Both expressions are strictly increasing in the positive rank $\pi_i$, establishing order and selected-action identity.
\end{proof}
The original terminal direction supplies $u_i$ whenever its displacement is nonzero; a fixed unit-RMS direction supplies the zero-displacement case. Earlier future steps remain unchanged under Equation~\ref{eq:ordinal}. This property expresses learned decision structure through the native planning interface. Table~\ref{tab:deployment} verifies numerical recovery in the deployed implementation; Appendix~\ref{app:representation} records the implementation conventions and temporal-transport diagnostic.

\end{document}